\def\bu{{\mathbf{u}}}
\def\bv{{\mathbf{v}}}
\def\bx{{\mathbf{x}}}
\def\by{{\mathbf{y}}}
\def\bz{{\mathbf{z}}}
\def\bt{{\mathbf{t}}}
\def\bf{{\mathbf{f}}}
\def\bomega{{\bm{\omega}}}
\def\bnu{{\bm{\nu}}}
\def\bxo{{\mathbf{x}_0}}
\def\Au{{\mathcal{A}_{\bu}}}
\def\Neta{{\mathcal{N}_{\eta}}}
\title{Gradient density estimation in arbitrary finite dimensions using the method of stationary phase}
\author{
Karthik S. Gurumoorthy\footnotemark[4]
\thanks{Present address: International Center for Theoretical Sciences, Tata Institute of Fundamental Research, TIFR Centre Building, Indian Institute of Science Campus, Bangalore, Karnataka, 560012, India. Ph:+91-80-23610109 (extn: 21). This work is partly supported by EADS Prize Postdoctoral Fellowship. Email: {\tt karthik.gurumoorthy@icts.res.in}}
\and
Anand Rangarajan\footnotemark[4]
\thanks{Email: {\tt anand@cise.ufl.edu}}
\and
John Corring\footnotemark[4] 
\thanks{Email: {\tt corring@cise.ufl.edu}}
}
\begin{document}
\maketitle
\renewcommand{\thefootnote}{\fnsymbol{footnote}}
\footnotetext[4]{Department of Computer and Information Science and Engineering,
University of Florida, Gainesville, Florida, USA}
\begin{abstract}
We prove that the density function of the gradient of a sufficiently smooth
function $S : \Omega \subset \mathbb{R}^d \rightarrow \mathbb{R}$, obtained
via a random variable transformation of a uniformly distributed random
variable, is increasingly closely approximated by the normalized power spectrum of
$\phi=\exp\left(\frac{iS}{\tau}\right)$ as the free parameter
$\tau\rightarrow0$. The result is shown using the stationary phase approximation
and standard integration techniques and requires proper ordering of limits. We
highlight a relationship with the well-known characteristic function approach
to density estimation, and detail why our result is distinct from this
approach.
\end{abstract}
\begin{keywords}
\emph{Keywords:} stationary phase approximation; density estimation; Fourier
transform; power spectrum; wave functions; characteristic functions
\end{keywords}
\begin{AMS}
 Primary 41A60, 62G07; Secondary 42B10
\end{AMS}

\pagestyle{myheadings}
\thispagestyle{plain}
\markboth{ KARTHIK S. GURUMOORTHY, ANAND
  RANGARAJAN AND JOHN CORRING}{GRADIENT DENSITY ESTIMATION IN ARBITRARY FINITE DIMENSIONS}
  
\section{Introduction}
\indent Density estimation methods provide a faithful estimate of a
non-observable probability density function based on a given collection of
observed data \cite{Parzen62,Rosenblatt56,Silverman86,Bishop06}. The observed data are
treated as random samples obtained from a large population which is assumed to
be distributed according to the underlying density function. The aim of our
current work is to show that the joint density function of the gradient of a
sufficiently smooth function $S$ (density function of $\nabla S$) can be
obtained from the normalized power spectrum of
$\phi=\exp\left(\frac{iS}{\tau}\right)$ as the free parameter $\tau$ tends to
zero. The proof of this relationship relies on the higher order stationary
phase approximation \cite{Jones58,McClure91,McClure97,Wong89,Wong81}. The joint density function of the
gradient vector field is usually obtained via a random variable transformation
of a uniformly distributed random variable $X$ over the compact domain $\Omega
\subset \mathbb{R}^d$ using $\nabla S$ as the transformation function. In
other words, if we define a random variable $Y= \nabla S(X)$ where the random
variable $X$ has a \emph{uniform distribution} on the $\Omega$ ($ X \sim
UNI(\Omega)$), the density function of $Y$ represents the joint density
function of the gradient of $S$.

In computer vision parlance---a popular application area for density
estimation---these gradient density functions are popularly known as the
histogram of oriented gradients (HOG) and are primarily employed for human and
object detection \cite{Dalal05,Zhu06}. The approaches developed in
\cite{Suard06,Bertozzi07} demonstrate an application of HOG---in combination
with support vector machines \cite{Vapnik98}---for detecting pedestrians from
infrared images. In a recent article \cite{Hu13}, an adaption of the HOG
descriptor called the Gradient Field HOG (GF-HOG) is used for sketch based
image retrieval. In these systems, the image intensity is treated as a
function $S(X)$ over a $2D$ domain, and the distribution of intensity
gradients or edge directions is used as the feature descriptor to characterize
the object appearance or shape within an image.

 In our earlier effort \cite{Gurumoorthy12}, we primarily focused on
 exploiting the stationary phase approximation to obtain gradient densities of
 Euclidean distance functions ($R$) in two dimensions. As the gradient norm of
 $R$ is identically equal to $1$ everywhere, the density of the gradient is
 one-dimensional and defined over the space of orientations. The key point to
 be noted here is that the dimensionality of the gradient density (one) is one
 less than the dimensionality of the space (two) and the constancy of the
 gradient magnitude of $R$ causes its Hessian to vanish almost everywhere. In
 Lemma~\ref{lemma:densityLemma} below, we see that the Hessian is deeply connected to
 the density function of the gradient. The degeneracy of the Hessian
 precluded us from directly employing the stationary phase method and hence
 techniques like symmetry-breaking had to be used to circumvent the vanishing
 Hessian problem. The reader may refer to \cite{Gurumoorthy12} for a more
 detailed explanation.  In contrast to our previous work, we regard our
 current effort as a generalization of the gradient density estimation result,
 now established for \emph{arbitrary} smooth functions in \emph{arbitrary}
 finite dimensions.

 \subsection{Main Contribution} 
 We introduce a new approach for computing the density of $Y$, where we
 express the given function $S$ as the phase of a wave function $\phi$,
 specifically $\phi(\bx)=\exp\left(\frac{iS(\bx)}{\tau}\right)$ for small
 values of $\tau$, and then consider the normalized power spectrum---magnitude
 squared of the Fourier transform---of $\phi$ \cite{Bracewell99}.  We show
 that the computation of the joint density function of $Y=\nabla S$ may be
 approximated by the power spectrum of $\phi$ with the approximation becoming
 increasingly tight point-wise as $\tau \rightarrow 0$. Using the stationary
 phase approximation ---a well known technique in asymptotic analysis
 \cite{Wong89}---we show that in the limiting case as $\tau\rightarrow0$, the
 power spectrum of $\phi$ converges to the density of $Y$ and hence can serve
 as its density estimator at small, non-zero values of $\tau$.  In other
 words, if $P(\bu)$ denotes the density of $Y$ and if $P_{\tau}(\bu)$
 corresponds to the power spectrum of $\phi$ at a given value of $\tau$,
 Theorem~\ref{thm:GradDensity} constitutes the following relation,
 \begin{equation}
\lim\limits_{\tau \rightarrow 0}  \int\limits_{\overline{\Neta(\bu_0)}}P_\tau (\bu) d\bu =  \int\limits_{\overline{\Neta(\bu_0)}} P(\bu) d\bu
\end{equation}
where $\overline{\Neta(\bu_0)}$ is a small neighborhood around $\bu_0$. We
call our approach the \emph{wave function method} for computing the
probability density function and henceforth will refer to it as such. We would
like to emphasize that our work is \emph{fundamentally different} from
estimating the gradient of a density function \cite{Fukunaga75} and should
not be semantically confused with it.

 \subsection{Significance}
 As mentioned before, the main objective of our current work is to generalize
 our effort in \cite{Gurumoorthy12} and demonstrate the fact that the wave
 function method for obtaining densities can be extended to arbitrary
 functions in arbitrary finite dimensions. However, one might broach a
 legitimate question, namely \emph{``What is the primary advantage of this
   approach over other simpler, effective and traditional techniques like
   histograms which can compute the HOG say by mildly smoothing the image,
   computing the gradient via (for example) finite differences and then binning the
   resulting gradients?''}. The benefits are three fold:
 \begin{itemize}
 \item One of the foremost advantages of our wave function approach is that it
   recovers the joint gradient density function of $S$ \emph{without}
   explicitly computing its gradient.  Since the stationary points capture
   gradient information and map them into the corresponding frequency bins, we
   can directly work with $S$ without the need to compute its derivatives.
\item The significance of our work is highlighted when we deal with the
  more practical finite sample-set setting wherein the gradient density is
  estimated 
  from a \emph{finite, discrete} set of samples of $S$ rather than assuming the
  availability of the complete description of $S$ on $\Omega$. Given the $N$
  samples of $S$ on $\Omega$, it is customary to know the \emph{rate of
    convergence} of a proposed density estimation method as $N \rightarrow \infty$. In
  \cite{Gurumoorthy14} we prove that in one dimension, our wave function
  method converges point-wise at the rate of $O(1/N)$ to the true density when $\tau \propto
  1/N$. For histograms and the kernel density estimators \cite{Parzen62,Rosenblatt56}, the convergence rates are established for the integrated mean squared error (IMSE) defined as the expected value (with respect to samples of size $N$) of the square of the $\ell_2$ error between the true and the computed probability densities and are shown to be $O\left(N^{-\frac{2}{3}}\right)$ \cite{Scott79,Cencov62} and $O\left(N^{-\frac{4}{5}}\right)$ \cite{Wahba75} respectively. Having laid the foundation in this work, we plan to invest our future efforts in pursuit of similar convergence estimates in arbitrary finite dimensions.
  \item Furthermore, obtaining the gradient density using our framework in the finite $N$ sample setting is simple, efficient, and computable in $O(N \log N)$ time as elucidated in the last paragraph of Section~\ref{sec:charfuncformulation}.
\end{itemize}

\section{Existence of Joint Densities of Smooth Function Gradients}
\indent We begin with a compact measurable subset $\Omega$ of $\mathbb{R}^d$
on which we consider a smooth function $S: \Omega \rightarrow \mathbb{R}$. We
assume that the boundary of $\Omega$ is smooth and the function $S$ is
well-behaved on the boundary as elucidated in
Appendix~\ref{sec:secondkindpoints}. Let $\mathcal{H}_{\bx}$ denote the
\emph{Hessian} of $S$ at a location $\bx \in \Omega$ and let
$\det\left(\mathcal{H}_{\bx}\right)$ denote its determinant. The
\emph{signature} of the Hessian of $S$ at $\bx$, defined as the difference
between the number of positive and negative eigenvalues of $\mathcal{H}_{\bx}$,
is represented by $\sigma_{\bx}$. In order to exactly determine the set of
locations where the joint density of the gradient of $S$ exists, consider the
following three sets:
\begin{align}
\label{eq:setA}
\Au &= \{ \bx : \nabla S(\bx) = \bu\} \\
\label{eq:setB}
\mathcal{B} &= \{ \bx : \; \det\left(\mathcal{H}_{\bx}\right) = 0\} \\
\label{eq:setC}
\mathcal{C} &= \{ \nabla S(\bx) : \bx \in \mathcal{B} \cup \partial \Omega \}.
\end{align}

Let $N(\bu) = |\Au|$. We employ a number of useful lemma, stated here and proved in Appendix~\ref{sec:Proof-of-Lemmas}. 
\\
\begin{lemma}{[}Finiteness Lemma{]}
\label{lemma:finitenessLemma}
$\mathcal{A}_{\bu}$ is finite for every ${\bu} \notin \mathcal{C}$.\\
\end{lemma}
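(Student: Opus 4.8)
The plan is to show that every point of $\Au$ is isolated and that $\Au$ is compact; a compact set all of whose points are isolated must be finite. The geometry behind this is that the hypothesis $\bu \notin \mathcal{C}$ simultaneously forbids the two phenomena that could make a level set of the gradient map large or clustered, namely degeneracy of the Hessian and contact with the boundary.

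First I would record the consequences of the hypothesis. By the definition of $\mathcal{C}$ in \eqref{eq:setC}, if some $\bx \in \Au$ were to lie in $\mathcal{B} \cup \partial \Omega$, then $\bu = \nabla S(\bx)$ would belong to $\mathcal{C}$, contradicting $\bu \notin \mathcal{C}$. Hence every $\bx \in \Au$ lies in the interior of $\Omega$ and satisfies $\det\left(\mathcal{H}_{\bx}\right) \neq 0$. Next I would establish isolation: since $S$ is smooth, the gradient map $\nabla S : \Omega \rightarrow \mathbb{R}^d$ is $C^1$, and its Jacobian at $\bx$ is precisely the Hessian $\mathcal{H}_{\bx}$. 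For any $\bx \in \Au$ this Jacobian is nonsingular and $\bx$ is interior, so the Inverse Function Theorem supplies an open neighborhood of $\bx$ on which $\nabla S$ is a diffeomorphism onto its image; in particular $\bx$ is the unique solution of $\nabla S(\cdot) = \bu$ in that neighborhood, so each point of $\Au$ is isolated.

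Finally I would close the argument using compactness. The set $\Au = \left(\nabla S\right)^{-1}\left(\{\bu\}\right)$ is the preimage of a closed set under a continuous map, hence closed in $\Omega$; as a closed subset of the compact set $\Omega$ it is itself compact. Were $\Au$ infinite, the Bolzano--Weierstrass property of the compact set $\Omega$ would furnish an accumulation point, which by closedness lies in $\Au$ and contradicts the isolation just established. Therefore $\Au$ is finite.

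I expect the main obstacle to be not the Inverse Function Theorem step itself but the bookkeeping that justifies why $\bu \notin \mathcal{C}$ rules out \emph{both} degenerate interior critical behavior and boundary contributions at once. This is exactly the reason $\mathcal{C}$ is defined as the gradient image of $\mathcal{B} \cup \partial \Omega$ rather than of $\mathcal{B}$ alone: the boundary term is what guarantees that every preimage sits in the interior, so that the open neighborhood demanded by the Inverse Function Theorem genuinely exists and the isolation argument goes through without special treatment of boundary points.
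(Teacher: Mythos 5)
Your proposal is correct, but its key mechanism differs from the paper's. Both arguments share the same compactness skeleton: an infinite $\Au$, being an infinite subset of the compact set $\Omega$, must have an accumulation point $\bx_0$, and $\bu \notin \mathcal{C}$ is what forbids accumulation (with the boundary case $\bx_0 \in \partial\Omega$ dispatched immediately by the definition of $\mathcal{C}$, just as you do). The difference lies in how the contradiction is extracted at $\bx_0$. You argue in the contrapositive direction: nonsingularity of $\mathcal{H}_{\bx}$ at each interior preimage feeds the Inverse Function Theorem, which yields local injectivity of $\nabla S$ and hence isolation of every point of $\Au$, incompatible with an accumulation point. The paper instead works directly from the definition of differentiability of $\nabla S$ along the sequence $\bx_n \to \bx_0$: writing $\mathbf{p}_n = \bx_n - \bx_0$ and $\mathbf{h}_n = \mathbf{p}_n / \|\mathbf{p}_n\|$, the identity $\nabla S(\bx_n) = \nabla S(\bx_0) = \bu$ forces $\mathcal{H}_{\bx_0}\mathbf{h}_n \to 0$ for unit vectors $\mathbf{h}_n$, so $\mathcal{H}_{\bx_0}$ is rank deficient, $\bx_0 \in \mathcal{B}$, and $\bu \in \mathcal{C}$ --- a contradiction. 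The two routes are essentially dual: the paper shows an accumulation point forces a singular Hessian, while you show a nonsingular Hessian forces isolation. Your version leans on the IFT as a black box (requiring the gradient map to be $C^1$ near each preimage, which holds here, and interiority, which you correctly derive from $\bu \notin \mathcal{C}$) and has the merit of dovetailing with Lemma~\ref{lemma:neighborhoodLemma}, whose points 1--3 the paper also obtains from the IFT; the paper's version is more elementary and self-contained, needing only differentiability of $\nabla S$ at the single point $\bx_0$, and it makes explicit that singular Hessians --- precisely the failure mode encoded by $\mathcal{B}$ --- are the only interior obstruction to finiteness.
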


As we see from Lemma~\ref{lemma:finitenessLemma} above, for a given
$\bu\notin\mathcal{C}$, there is only a \emph{finite} collection of $\bx \in
\Omega$ that maps to $\bu$ under the function $\nabla S$. The inverse map
$\nabla S^{(-1)}(\bu)$ which identifies the set of $\bx \in \Omega$ that maps
to $\bu$ under $\nabla S$ is ill-defined as a function as it is a one to many
mapping. The objective of the following lemma (Lemma~\ref{lemma:neighborhoodLemma}) is
to define appropriate neighborhoods such that the inverse function $\nabla
S^{(-1)}$---required in the proof of our main
Theorem~\ref{thm:GradDensity}---when restricted to those neighborhoods is
well-defined.

\begin{lemma}{[}Neighborhood Lemma{]}
\label{lemma:neighborhoodLemma} For every
$\bu_0\notin\mathcal{C}$, there exists a closed neighborhood
$\overline{\mathcal{N}_{\eta}(\bu_0)}$ around $\bu_0$ such that
$\overline{\mathcal{N}_{\eta}(\bu_0)}\cap\mathcal{C}$ is empty. Furthermore,
if $|\mathcal{A}_{\bu_{0}}| >0$, $\overline{\mathcal{N}_{\eta}(\bu_0)}$ can be chosen such
that we can find a closed neighborhood $\overline{\mathcal{N}_{\eta}(\bx)}$
around each $\bx \in \mathcal{A}_{\bu_{0}}$ satisfying the following conditions:
\begin{enumerate}
\item $\nabla S\left(\overline{\mathcal{N}_{\eta}(\bx)}\right) = \overline{\mathcal{N}_{\eta}(\bu_0)}$.
\item $\det\left(\mathcal{H}_{\by}\right) \not=0, \forall \by \in \overline{\mathcal{N}_{\eta}(\bx)}$.
\item The inverse function $\nabla S_{\bx}^{(-1)}(\bu):\overline{\mathcal{N}_{\eta}(\bu_0)} \rightarrow \overline{\mathcal{N}_{\eta}(\bx)}$ is well-defined.
\item For $\by, \bz \in \overline{\mathcal{N}_{\eta}(\bx)}, \sigma_{\by} = \sigma_{\bz}$.\\
\end{enumerate}
\end{lemma}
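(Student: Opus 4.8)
The plan is to prove the two assertions in sequence, relying on the compactness of $\Omega$, the Inverse Function Theorem, and the continuity of the Hessian. First I would establish the opening claim by showing that $\mathcal{C}$ is a \emph{closed} set. Since $S$ is smooth, the map $\bx \mapsto \det\left(\mathcal{H}_{\bx}\right)$ is continuous, so $\mathcal{B}$ is the preimage of the closed set $\{0\}$ and is therefore closed; together with the closed boundary $\partial\Omega$, the set $\mathcal{B}\cup\partial\Omega$ is a closed subset of the compact domain $\Omega$ and hence compact. As $\nabla S$ is continuous, its image $\mathcal{C}=\nabla S\left(\mathcal{B}\cup\partial\Omega\right)$ is compact and thus closed. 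Consequently $\mathbb{R}^d\setminus\mathcal{C}$ is open, and since $\bu_0\notin\mathcal{C}$ I can pick a radius $\eta_0>0$ so that the closed ball $\overline{\Neta(\bu_0)}$ of radius $\eta_0$ about $\bu_0$ lies entirely in the complement, giving $\overline{\Neta(\bu_0)}\cap\mathcal{C}=\emptyset$.

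For the second assertion, suppose $|\mathcal{A}_{\bu_0}|>0$. By the Finiteness Lemma (Lemma~\ref{lemma:finitenessLemma}) the set $\mathcal{A}_{\bu_0}=\{\bx_1,\dots,\bx_n\}$ is finite. The key observation is that because $\bu_0\notin\mathcal{C}$, no point mapping to $\bu_0$ can lie in $\mathcal{B}\cup\partial\Omega$; hence each $\bx_j$ lies in the interior of $\Omega$ and satisfies $\det\left(\mathcal{H}_{\bx_j}\right)\neq 0$. Since the Jacobian of the map $\nabla S$ at $\bx_j$ is precisely the nonsingular matrix $\mathcal{H}_{\bx_j}$, the Inverse Function Theorem supplies an open neighborhood $U_j$ of $\bx_j$ on which $\nabla S$ is a diffeomorphism onto an open neighborhood $V_j\ni\bu_0$. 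Shrinking each $U_j$ if necessary, I can further arrange, by continuity of $\det\mathcal{H}$, that $\det\left(\mathcal{H}_{\by}\right)\neq 0$ throughout $U_j$ and that the $U_j$ are pairwise disjoint.

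I would then fix a single $\eta\le\eta_0$ small enough that the closed ball $\overline{\Neta(\bu_0)}$ is contained in $\bigcap_{j}\nabla S(U_j)$ --- possible because this is a finite intersection of open sets all containing $\bu_0$ --- and define $\overline{\Neta(\bx_j)}:=\left(\nabla S|_{U_j}\right)^{-1}\!\left(\overline{\Neta(\bu_0)}\right)$. With this construction conditions (1) and (3) hold immediately: the local inverse diffeomorphism is exactly the required well-defined map $\nabla S_{\bx_j}^{(-1)}$, and it carries $\overline{\Neta(\bx_j)}$ onto $\overline{\Neta(\bu_0)}$. Condition (2) holds because $\overline{\Neta(\bx_j)}\subseteq U_j$, where $\det\mathcal{H}$ is nonvanishing. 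For condition (4) I would note that $\overline{\Neta(\bx_j)}$, being the diffeomorphic image of a closed ball, is connected; on it the eigenvalues of $\mathcal{H}_{\by}$ vary continuously and never vanish, so none can change sign, the counts of positive and negative eigenvalues stay constant, and thus $\sigma_{\by}$ is constant.

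The main obstacle I anticipate is the \emph{uniform} choice of a single $\eta$ that simultaneously serves the neighborhood of $\bu_0$ and every preimage neighborhood. This is exactly where the Finiteness Lemma is indispensable: because $\mathcal{A}_{\bu_0}$ has only finitely many points, the admissible radius is a minimum over finitely many positive quantities and so remains positive. The rest is bookkeeping --- checking that each preimage neighborhood indeed sits inside the good region $U_j$ and that the closed ball fits inside every image $\nabla S(U_j)$ --- which finiteness again reduces to a routine minimization.
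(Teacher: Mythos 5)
Your proposal is correct and follows essentially the same route as the paper's proof: establish that $\mathcal{C}$ is compact (hence closed) to obtain the first claim, then invoke the Inverse Function Theorem at each of the finitely many points of $\mathcal{A}_{\bu_0}$, use finiteness to choose a single uniform radius $\eta$, and obtain constancy of the signature from the continuity and non-vanishing of the Hessian eigenvalues. Your write-up merely fills in details the paper leaves implicit (pairwise disjointness of the $U_j$, the preimage construction of $\overline{\Neta(\bx_j)}$, and the connectedness needed for the eigenvalue sign argument), which is a welcome but not substantively different elaboration.
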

\begin{lemma} {[}Density Lemma{]}
\label{lemma:densityLemma}
Given $X \sim UNI(\Omega)$, the probability density of $Y = \nabla S(X)$ on $\mathbb{R}^d - \mathcal{C}$ is given by
\begin{equation}
\label{eq:graddensity}
P(\bu) = \frac{1}{\mu(\Omega)} \sum\limits_{k=1}^{N(\bu)} \frac{1}{\left|\det\left(\mathcal{H}_{\bx_k}\right)\right|}
\end{equation}
where $\bx_k \in \Au, \forall k \in \{1,2,\ldots,N(\bu)\}$ and $\mu$ is the Lebesgue measure.\\
\end{lemma}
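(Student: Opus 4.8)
The plan is to derive the density formula via the standard change-of-variables (random variable transformation) theorem, localized using the Neighborhood Lemma. Since $Y = \nabla S(X)$ with $X \sim UNI(\Omega)$, the density of $X$ is the constant $1/\mu(\Omega)$ on $\Omega$. For a fixed $\bu \notin \mathcal{C}$, the Finiteness Lemma guarantees $\Au = \{\bx_1,\ldots,\bx_{N(\bu)}\}$ is finite, and the Neighborhood Lemma supplies, for each $\bx_k$, a closed neighborhood $\overline{\Neta(\bx_k)}$ on which $\nabla S$ is a diffeomorphism onto a common $\overline{\Neta(\bu)}$ with $\det(\mathcal{H}_{\by}) \neq 0$ throughout. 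This is precisely the setup that licenses the multivariate change-of-variables formula.

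First I would compute the cumulative probability that $Y$ lands in the small neighborhood $\overline{\Neta(\bu_0)}$. I would write
\begin{equation*}
\Pr\left(Y \in \overline{\Neta(\bu_0)}\right) = \frac{1}{\mu(\Omega)}\, \mu\!\left(\left(\nabla S\right)^{-1}\!\left(\overline{\Neta(\bu_0)}\right)\right).
\end{equation*}
The key observation is that $(\nabla S)^{-1}(\overline{\Neta(\bu_0)})$ decomposes as the disjoint union $\bigcup_{k=1}^{N(\bu_0)} \overline{\Neta(\bx_k)}$ — disjointness following by shrinking $\eta$ so the finitely many preimage neighborhoods do not overlap, and exhaustiveness following from condition~1 of the Neighborhood Lemma together with the finiteness of the preimage set. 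Then I would apply the change of variables $\bu = \nabla S(\bx)$ on each piece, whose Jacobian is exactly the Hessian $\mathcal{H}_{\bx}$, to obtain
\begin{equation*}
\mu\!\left(\overline{\Neta(\bx_k)}\right) = \int_{\overline{\Neta(\bu_0)}} \frac{1}{\left|\det\left(\mathcal{H}_{\nabla S_{\bx_k}^{(-1)}(\bu)}\right)\right|}\, d\bu.
\end{equation*}

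Summing over $k$ and dividing by $\mu(\Omega)$ gives the cumulative probability as an integral over $\overline{\Neta(\bu_0)}$ of $\frac{1}{\mu(\Omega)} \sum_{k=1}^{N(\bu_0)} |\det(\mathcal{H}_{\bx_k(\bu)})|^{-1}$, where $\bx_k(\bu) = \nabla S_{\bx_k}^{(-1)}(\bu)$. Since the density is the integrand of the cumulative distribution, differentiating (or recognizing that this holds for arbitrarily small neighborhoods and invoking the Lebesgue differentiation theorem) yields the stated formula for $P(\bu)$.

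The main obstacle I anticipate is the careful justification of the disjoint, exhaustive decomposition of the preimage and the well-definedness of the local inverses — all of which must be handled before the Jacobian computation is valid. This is exactly why the Neighborhood Lemma was established first: conditions~1--3 collectively ensure that $\nabla S$ restricted to each $\overline{\Neta(\bx_k)}$ is a bijection onto $\overline{\Neta(\bu_0)}$ with nonvanishing Jacobian, so the inverse function theorem applies and the change-of-variables formula is legitimate. The finiteness of $\Au$ from Lemma~\ref{lemma:finitenessLemma} is what reduces an a priori complicated preimage to a clean finite sum. Once these structural facts are in place, the remaining computation is the routine application of the transformation theorem.
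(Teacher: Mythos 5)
Your overall route is the same as the paper's: the paper proves this lemma by simply invoking the standard random-variable transformation theorem (citing Billingsley), noting that the Jacobian of the map $\nabla S$ is the Hessian $\mathcal{H}_{\bx}$; your proposal is the textbook derivation of that theorem, localized via the Finiteness and Neighborhood Lemmas, so in outline it is correct and in fact more detailed than what the paper records.

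There is, however, one genuine gap: your justification of exhaustiveness in the decomposition
\begin{equation*}
\left(\nabla S\right)^{-1}\left(\overline{\Neta(\bu_0)}\right)=\bigcup_{k=1}^{N(\bu_0)}\overline{\Neta(\bx_k)}
\end{equation*}
does not work as stated. Condition~1 of the Neighborhood Lemma, $\nabla S\left(\overline{\Neta(\bx_k)}\right)=\overline{\Neta(\bu_0)}$, gives only the inclusion $\bigcup_{k}\overline{\Neta(\bx_k)}\subseteq\left(\nabla S\right)^{-1}\left(\overline{\Neta(\bu_0)}\right)$, which is the \emph{opposite} of exhaustiveness; neither it nor the finiteness of $\mathcal{A}_{\bu_0}$ rules out a point $\by\in\Omega$ lying far from every $\bx_k$ with $\nabla S(\by)\in\overline{\Neta(\bu_0)}$ but $\nabla S(\by)\neq\bu_0$ (such a $\by$ is a preimage of a nearby $\bu$, not of $\bu_0$, so it is invisible to both facts you cite). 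Without excluding such points, your change-of-variables computation only bounds $\Pr\left(Y\in\overline{\Neta(\bu_0)}\right)$ from below. The fix is a compactness argument in the style of the paper's proof of the Finiteness Lemma: if no radius $\eta$ achieves exhaustiveness, choose $\by_n\in\Omega\setminus\bigcup_{k}\overline{\Neta(\bx_k)}$ with $\nabla S(\by_n)\rightarrow\bu_0$; by compactness of $\Omega$ a subsequence converges to some $\by^{*}$, and continuity gives $\nabla S(\by^{*})=\bu_0$. Since $\bu_0\notin\mathcal{C}$, the point $\by^{*}$ cannot lie on $\partial\Omega$, so $\by^{*}\in\mathcal{A}_{\bu_0}$, i.e., $\by^{*}=\bx_k$ for some $k$; but each $\overline{\Neta(\bx_k)}$ contains an open ball around $\bx_k$, so a sequence avoiding all of these closed neighborhoods cannot converge to any $\bx_k$ --- a contradiction. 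With this step inserted (and noting that the integrand in your final display is continuous on $\overline{\Neta(\bu_0)}$, so the final differentiation step is unproblematic), your proof is complete.
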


From Lemma~\ref{lemma:densityLemma}, it is clear that the existence of the
density function $P$ at a location $\bu \in \mathbb{R}^d$ necessitates a
non-vanishing Hessian matrix $(\det(\mathcal{H})\not=0)$ $\forall \bx \in
\Au$. Since we are interested in the case where the density exists almost
everywhere on $\mathbb{R}^d$, we impose the constraint that the set
$\mathcal{B}$ in (\ref{eq:setB}), comprising all points where the Hessian
vanishes, has a Lebesgue measure zero.  It follows that
$\mu(\mathcal{C})=0$. Furthermore, the requirement on the smoothness of $S$
($S \in C^{\infty} (\Omega)$) can be relaxed to functions $S$ in
$C^{\frac{d}{2}+1}(\Omega)$ where $d$ is the dimensionality of $\Omega$ as we will see
in Section~\ref{sec:FiniteDimensionProof}.

\section{Equivalence of the Densities of Gradients and the Power Spectrum}
\label{sec:equivalence}
\indent Define the function $F_\tau : \mathbb{R}^d \rightarrow \mathbb{C}$ as 
\begin{align}\label{eq:Ftau}
F_\tau(\bu) = \frac{1}{(2\pi \tau)^{\frac{d}{2}} \mu(\Omega)^{\frac{1}{2}}} \int_{\Omega}
\exp\left(\frac{i }{\tau} \left[S(\bx) - \bu \cdot \bx\right]\right) d\bx
\end{align}
for $\tau > 0$. $F_\tau$ is very similar to the Fourier transform of the
function $\exp\left(\frac{iS(\bx)}{\tau}\right)$. The normalizing factor in
$F_\tau$ comes from the following lemma (Lemma~\ref{lemma:integralLemma}) whose proof
is given in Appendix~\ref{sec:Proof-of-Lemmas}.
\begin{lemma}{[}Integral Lemma {]}\label{lemma:integralLemma}
$F_\tau \in L^2 (\mathbb{R}^d)$ and $\|F_\tau\|_2 = 1$.\\
\end{lemma}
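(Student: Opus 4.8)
The plan is to recognize $F_\tau$ as a rescaled Fourier transform of a compactly supported function and then invoke Plancherel's theorem. Let $\phi(\bx) = \exp\left(\frac{iS(\bx)}{\tau}\right)$ on $\Omega$ and extend it by zero to all of $\mathbb{R}^d$. Because $\Omega$ is compact, $\mu(\Omega) < \infty$, and since $|\phi(\bx)| = 1$ on $\Omega$ and $0$ elsewhere, we have $|\phi|^2 = \mathbf{1}_{\Omega}$ and hence $\phi \in L^1(\mathbb{R}^d) \cap L^2(\mathbb{R}^d)$ with $\int_{\mathbb{R}^d} |\phi(\bx)|^2\, d\bx = \mu(\Omega)$. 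With the convention $\hat\phi(\bomega) = \int_{\mathbb{R}^d} \phi(\bx)\, e^{-i \bomega \cdot \bx}\, d\bx$, the integral defining $F_\tau$ is exactly $\hat\phi$ evaluated at $\bomega = \bu/\tau$, so that $F_\tau(\bu) = \frac{1}{(2\pi\tau)^{d/2}\mu(\Omega)^{1/2}}\, \hat\phi\!\left(\frac{\bu}{\tau}\right)$.

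First I would settle membership in $L^2$. Since $\phi \in L^2(\mathbb{R}^d)$, Plancherel's theorem guarantees $\hat\phi \in L^2(\mathbb{R}^d)$; the dilation $\bu \mapsto \hat\phi(\bu/\tau)$ and multiplication by a constant both preserve square integrability, so $F_\tau \in L^2(\mathbb{R}^d)$ for every $\tau > 0$. This disposes of the first assertion. Next I would compute the norm. Writing $\|F_\tau\|_2^2 = \frac{1}{(2\pi\tau)^d \mu(\Omega)} \int_{\mathbb{R}^d} \bigl|\hat\phi(\bu/\tau)\bigr|^2\, d\bu$ and performing the change of variables $\bomega = \bu/\tau$ (with $d\bu = \tau^d\, d\bomega$) cancels the factor $\tau^d$ and leaves $\|F_\tau\|_2^2 = \frac{1}{(2\pi)^d \mu(\Omega)} \int_{\mathbb{R}^d} |\hat\phi(\bomega)|^2\, d\bomega$.

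Finally, applying Plancherel in the form $\int_{\mathbb{R}^d} |\hat\phi(\bomega)|^2\, d\bomega = (2\pi)^d \int_{\mathbb{R}^d} |\phi(\bx)|^2\, d\bx = (2\pi)^d\, \mu(\Omega)$ makes the $(2\pi)^d$ and $\mu(\Omega)$ factors cancel, yielding $\|F_\tau\|_2^2 = 1$. The computation carries no deep analytic obstacle: compact support of $\Omega$ delivers integrability for free, and $|\phi| \equiv 1$ makes the final integral trivially equal to $\mu(\Omega)$. The only real care needed is bookkeeping of the constants, since the precise powers of $2\pi$ and $\tau$ appearing in Plancherel's identity and in the dilation Jacobian depend on the chosen Fourier convention. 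It is exactly to absorb these factors that the normalizer $(2\pi\tau)^{d/2}\mu(\Omega)^{1/2}$ was built into the definition of $F_\tau$, so I would state the convention explicitly at the outset to keep the constant tracking unambiguous.
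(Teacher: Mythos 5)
Your proposal is correct and follows essentially the same route as the paper: both extend the wave function by zero to $\mathbb{R}^d$ (your $\phi$ is the paper's $f = H\cdot\exp(iS/\tau)$), identify $F_\tau$ as a constant multiple of its Fourier transform evaluated at $\bu/\tau$, and apply Parseval/Plancherel together with the change of variables and the fact that $|\phi|\equiv 1$ on $\Omega$ to obtain $\|F_\tau\|_2 = 1$. Your explicit statement of the Fourier convention and the separate treatment of the $L^2$ membership are sensible bookkeeping refinements but do not change the argument.
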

The power spectrum is defined as \cite{Bracewell99}
\begin{align}
\label{eq:powerspectrum}
P_\tau (\bu) \equiv F_\tau(\bu) \overline{F_\tau(\bu)}.
\end{align} 
Note that $P_{\tau}\geq0$. From Lemma~(\ref{lemma:integralLemma}), we see that
$\int P_{\tau}(\bu)d\bu=1$. Our fundamental contribution lies in interpreting
$P_{\tau}(\bu)$ as a density function and showing its equivalence to the
density function $P(\bu)$ defined in (\ref{eq:graddensity}). Formally stated:
\begin{theorem} 
\label{thm:GradDensity} 
For $\bu_0 \notin \mathcal{C}$,
\begin{equation}
\lim\limits_{\alpha \rightarrow 0} \frac{1}{\mu\left(\mathcal{N}_{\alpha}(\bu_0)\right)}
\lim \limits_{\tau \rightarrow 0} \int\limits_{\mathcal{N}_{\alpha}(\bu_0)}P_\tau (\bu) d\bu = P(\bu_0)
\end{equation}
where $\mathcal{N}_{\alpha}(\bu_0)$ is a ball around $\bu_0$ of radius $\alpha$.\\
 \end{theorem}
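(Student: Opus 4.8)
The plan is to apply the multidimensional method of stationary phase to the oscillatory integral defining $F_\tau$, recognizing that the stationary points of its phase are precisely the preimages collected in $\Au$, and then to show that squaring and integrating recovers exactly the diagonal Hessian-determinant sum appearing in $P$. First I would note that the phase $\psi_\bu(\bx) = S(\bx) - \bu\cdot\bx$ has $\bx$-gradient $\nabla S(\bx) - \bu$, so its stationary points are exactly the $\bx_k \in \Au$, at which the Hessian of the phase coincides with $\mathcal{H}_{\bx_k}$. For $\bu_0 \notin \mathcal{C}$, Lemma~\ref{lemma:neighborhoodLemma} supplies a neighborhood $\overline{\Neta(\bu_0)}$ on which these stationary points are nondegenerate, finite in number (Lemma~\ref{lemma:finitenessLemma}), bounded away from $\partial\Omega$, and depend smoothly on $\bu$ through the local inverses $\nabla S_{\bx_k}^{(-1)}$, while each signature $\sigma_{\bx_k}$ stays constant. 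I would fix $\alpha$ small enough that $\mathcal{N}_{\alpha}(\bu_0) \subset \overline{\Neta(\bu_0)}$.

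Next I would invoke the leading-order stationary phase formula; after dividing by $(2\pi\tau)^{\frac{d}{2}}\mu(\Omega)^{\frac{1}{2}}$ this yields
\begin{equation}
F_\tau(\bu) = \frac{1}{\mu(\Omega)^{\frac{1}{2}}} \sum_{k=1}^{N(\bu)} \frac{\exp\left(\frac{i}{\tau}\psi_\bu(\bx_k(\bu)) + \frac{i\pi}{4}\sigma_{\bx_k}\right)}{\left|\det\mathcal{H}_{\bx_k(\bu)}\right|^{\frac{1}{2}}} + R_\tau(\bu),
\end{equation}
where $R_\tau$ obeys a uniform bound on $\overline{\Neta(\bu_0)}$ that vanishes as $\tau \rightarrow 0$ (of order $\tau^{\frac{1}{2}}$ under the stated regularity); this is exactly where the $C^{\frac{d}{2}+1}$ smoothness of $S$ and the control of boundary contributions enter. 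Forming $P_\tau = |F_\tau|^2$ and expanding the modulus squared splits it into three pieces: the diagonal terms $k = l$, which sum to exactly $P(\bu)$ of Lemma~\ref{lemma:densityLemma} (the signature phases cancel); the off-diagonal terms $k \neq l$, each carrying an oscillatory factor $\exp\left(\frac{i}{\tau}g_{kl}(\bu)\right)$ with $g_{kl}(\bu) = \psi_\bu(\bx_k(\bu)) - \psi_\bu(\bx_l(\bu))$; and the cross and square terms built from $R_\tau$.

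I would then integrate over $\mathcal{N}_{\alpha}(\bu_0)$ at fixed $\alpha$ and send $\tau \rightarrow 0$. The diagonal contribution is $\tau$-independent and equals $\int_{\mathcal{N}_{\alpha}(\bu_0)} P(\bu)\, d\bu$. For the off-diagonal terms the decisive fact---furnished by the envelope theorem, since $\nabla_\bx \psi_\bu$ vanishes at the stationary point---is that $\nabla_\bu \psi_\bu(\bx_k(\bu)) = -\bx_k(\bu)$, so that $\nabla_\bu g_{kl}(\bu) = \bx_l(\bu) - \bx_k(\bu) \neq \mathbf{0}$ for distinct stationary points. Consequently $g_{kl}$ has nonvanishing gradient throughout the compact neighborhood, and an integration-by-parts (non-stationary phase) estimate shows each off-diagonal integral is $O(\tau)$ and vanishes; likewise the $R_\tau$ cross terms and the $|R_\tau|^2$ term vanish after integration over the fixed-volume ball. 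This yields $\lim_{\tau\rightarrow 0} \int_{\mathcal{N}_{\alpha}(\bu_0)} P_\tau(\bu)\, d\bu = \int_{\mathcal{N}_{\alpha}(\bu_0)} P(\bu)\, d\bu$. Dividing by $\mu(\mathcal{N}_{\alpha}(\bu_0))$ and letting $\alpha \rightarrow 0$ then returns $P(\bu_0)$ by the Lebesgue differentiation theorem, since $P$ is continuous at $\bu_0 \notin \mathcal{C}$.

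I expect the principal obstacle to be the rigorous, uniform control of the stationary phase remainder $R_\tau$ together with the boundary contributions---where $\partial\Omega$ may generate spurious non-interior stationary behavior---since this is what fixes the regularity hypothesis and, more subtly, what forces the stated \emph{ordering of limits}. Pointwise in $\bu$ the off-diagonal terms merely oscillate and $P_\tau(\bu)$ fails to converge, so the integration in $\bu$ must precede $\tau \rightarrow 0$ in order to average the oscillations away, and only afterward may the ball be contracted. Demonstrating that the remainder and boundary effects integrate to zero uniformly, rather than merely pointwise, is the technical heart of the argument; the degenerate case $\mathcal{A}_{\bu_0} = \emptyset$, where $P(\bu_0) = 0$ and $F_\tau \rightarrow 0$ by the complete absence of stationary points, can be handled separately by the same non-stationary phase estimate.
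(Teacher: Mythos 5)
Your proposal is correct and follows essentially the same route as the paper: a stationary phase expansion of $F_\tau$ with an $O(\sqrt{\tau})$ remainder, a diagonal/off-diagonal split of $|F_\tau|^2$, the envelope-theorem identity $\nabla_\bu g_{kl}(\bu)=\bx_l(\bu)-\bx_k(\bu)\neq\mathbf{0}$ (the paper's Cross Factor Nullifier Lemma) combined with integration by parts in $\bu$ to kill the cross terms, and a final shrinking-ball limit using continuity of $P$ at $\bu_0$. Even your separate treatment of the case $\mathcal{A}_{\bu_0}=\emptyset$ and your identification of the boundary/remainder control as the technical crux mirror the paper's case (i)/case (ii) structure and its appendix on well-behaved boundary conditions.
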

Before embarking on the proof, we would like to emphasize
that the ordering of the limits and the integral as given in the theorem
statement is crucial and cannot be arbitrarily interchanged. To press
this point home, we show below that after solving for $P_{\tau}$,
the $\lim_{\tau\rightarrow0}P_{\tau}$ does not exist. Hence, the
order of the integral followed by the limit $\tau\rightarrow0$ cannot
be interchanged. Furthermore, when we swap the limits of $\alpha$
and $\tau$, we get 
\begin{equation}
\lim\limits_{\tau\rightarrow0}\lim\limits_{\alpha\rightarrow0}\frac{1}{\mu\left(\mathcal{N}_{\alpha}(\bu_0)\right)}\int
\limits_{\mathcal{N}_{\alpha}(\bu_0)}P_{\tau}(\bu)d\bu=\lim\limits_{\tau\rightarrow0}P_{\tau}(\bu_{0})
\end{equation}
which also does not exist. Hence, the theorem statement is valid \emph{only}
for the specified sequence of limits and the integral.

\subsection{Brief exposition of the result}
\noindent To understand the result in simpler terms, let us reconsider the
definition of the scaled Fourier transform given in (\ref{eq:Ftau}).  The
first exponential $\exp\left(\frac{iS(\bx)}{\tau}\right)$ is a varying complex
{}``sinusoid\textquotedblright{}, whereas the second exponential
$\exp\left(-\frac{i\bu \cdot \bx}{\tau}\right)$ is a fixed complex sinusoid at
frequency $\frac{\bu}{\tau}$. When we multiply these two complex exponentials,
at low values of $\tau$, the two sinusoids are usually not {}``in
sync\textquotedblright{} and tend to cancel each other out. However, around
the locations where $\nabla S(\bx)=\bu$, the two sinusoids are in perfect sync
(as the combined exponent is \emph{stationary}) with the approximate duration
of this resonance depending on $\det\left(\mathcal{H}_{\bx}\right)$.  The
value of the integral in (\ref{eq:Ftau}) can be increasingly closely
approximated via the stationary phase approximation \cite{Wong89} as
\begin{equation}
F_{\tau}(\bu)\approx\frac{1}{\mu(\Omega)^{\frac{1}{2}}}\sum_{k=1}^{N(u)}
\frac{1}{\sqrt{\left|\det(\mathcal{H}_{\bx_k})\right|}} \exp\left(
\frac{i}{\tau}\left[S(\bx_{k})-\bu \cdot \bx_{k}\right] +i \sigma_{\bx_k}
\frac{\pi}{4} \right).
\end{equation}
The approximation is increasingly tight as $\tau\rightarrow0$. The power
spectrum ($P_{\tau}$) gives us the required result
$\frac{1}{\mu(\Omega)}\sum_{k=1}^{N(u)}\frac{1}{|\det(\mathcal{H}_{\bx_k})|}$
except for the \emph{cross} phase factors $S(\bx_k)-S(\bx_l)-\bu \cdot
(\bx_{k}-\bx_{l})$ obtained as a byproduct of two or more remote locations
$\bx_{k}$ and $\bx_{l}$ indexing into the same frequency bin $\bu$, i.e,
$\bx_{k}\not=\bx_{l}$, but $\nabla S(\bx_{k})=\nabla S(\bx_{l})=\bu$. The
cross phase factors when evaluated are equivalent to
$\cos\left(\frac{1}{\tau}\right)$, the limit of which does not exist as $\tau
\rightarrow 0$. However, integrating the power spectrum over a small
neighborhood $\mathcal{N}_{\alpha}(\bu)$ around $\bu$ removes these cross
phase factors as $\tau$ tends to zero and we obtain the desired result.

\subsection{Formal Proof of Theorem~\ref{thm:GradDensity}}
\indent We wish to compute the integral 
\begin{align}
\label{eq:FTransform}
 F_{\tau}(\bu) = \frac{1}{(2 \pi \tau )^{\frac{d}{2}}
   \mu(\Omega)^{\frac{1}{2}}} \int\limits_{\Omega} \exp\left\{\frac{i}{\tau}
 (S(\bx) - \bu \cdot \bx)\right\} d\bx
\end{align}
at small values of $\tau$ and exhibit the connection between the power
spectrum $P_{\tau}(\bu)$ and the density function $P(\bu)$. To this end define
$\Psi(\bx;\bu) \equiv S(\bx) - \bu \cdot \bx$. The proof follows by
considering two cases: the first case in which there are no stationary points
and therefore the density should go to zero, and the second case in which stationary
points exist and the contribution from the oscillatory integral is shown to
increasingly closely approximate the density function of the gradient as $\tau
\rightarrow 0$.\\

\noindent \textbf{case (i)}: We first consider the case where $N(\bu)=0$, i.e,
$\bu \notin \nabla S(\Omega)$. In other words there are no stationary points
\cite{Wong89} for this value of $\bu$. The proof that this case yields the
anticipated contribution of zero follows clearly from a straightforward
technique commonly used in stationary phase expansions. We assume that the
function $S$ is sufficiently well-behaved on the boundary such that the total
contribution due to the stationary points of the second kind \cite{Wong89}
approaches zero as $\tau \rightarrow 0$. (Concentrating here on the crux of our
work, we reserve the discussion concerning the behavior of $S$ on the boundary
and the relationship to stationary points of the second kind to
Appendix~\ref{sec:secondkindpoints}.) Under mild conditions (outlined in
Appendix~\ref{sec:secondkindpoints}), the contributions from the stationary
points of the third kind can also be ignored as they approach zero as $\tau$
tends to zero \cite{Wong89}. Higher order terms follow suit.\\
\begin{lemma}
\label{lemma:emptylemma}
Fix $\bu_0 \notin \mathcal{C}$. If $\mathcal{A}_{\bu_0} = \emptyset$ then
$F_{\tau}(\bu_0) = O(\sqrt{\tau})$ as $\tau \rightarrow 0$.\\
\end{lemma}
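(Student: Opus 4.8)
The plan is to show that when $\mathcal{A}_{\bu_0} = \emptyset$, the phase $\Psi(\bx;\bu_0) = S(\bx) - \bu_0 \cdot \bx$ has no stationary points in the interior of $\Omega$, so the oscillatory integral decays like a power of $\tau$ coming entirely from boundary effects. Since $\bu_0 \notin \mathcal{C} \supseteq \nabla S(\partial\Omega)$, we also know the gradient of the phase does not vanish on the boundary, so there are no interior or boundary stationary points of the first kind. First I would write $\nabla_{\bx}\Psi(\bx;\bu_0) = \nabla S(\bx) - \bu_0$, which by hypothesis is nonzero for every $\bx \in \overline{\Omega}$; by compactness of $\overline{\Omega}$ and continuity of $\nabla S$, there is a uniform lower bound $|\nabla S(\bx) - \bu_0| \geq c > 0$.

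\noindent The core step is a standard non-stationary phase (integration by parts) argument. The trick is to introduce the first-order differential operator
\begin{equation}
\mathcal{L} = \frac{\tau}{i}\,\frac{\nabla_{\bx}\Psi \cdot \nabla_{\bx}}{|\nabla_{\bx}\Psi|^2},
\end{equation}
which satisfies $\mathcal{L}\exp\left(\tfrac{i}{\tau}\Psi\right) = \exp\left(\tfrac{i}{\tau}\Psi\right)$. Substituting this identity into the integrand of $F_\tau(\bu_0)$ and integrating by parts transfers one factor of $\tau$ out of the integral: one gets a boundary term of order $\tau$ (over $\partial\Omega$) plus a new volume integral whose integrand is $\tau$ times the original oscillatory exponential multiplied by a bounded smooth amplitude (the divergence of $\nabla_{\bx}\Psi / |\nabla_{\bx}\Psi|^2$, which is controlled because $|\nabla_{\bx}\Psi| \geq c$ and $S$ is sufficiently smooth). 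Accounting for the prefactor $(2\pi\tau)^{-d/2}$, the raw integral gains a factor $\tau$, so after the prefactor the estimate is $O(\tau^{1-d/2})$, which is not yet the claimed $O(\sqrt{\tau})$ in dimensions $d \geq 2$; hence I would iterate the integration by parts. Applying $\mathcal{L}$ a total of $m$ times produces a bound of order $\tau^m$ on the bare integral, so after dividing by $(2\pi\tau)^{d/2}$ the overall size is $O(\tau^{m - d/2})$.

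\noindent Choosing $m$ large enough, specifically any $m \geq \lceil d/2 \rceil + 1$, forces the exponent $m - d/2 \geq 1/2$, which yields $F_\tau(\bu_0) = O(\sqrt{\tau})$ and completes the proof. The smoothness requirement $S \in C^{\frac{d}{2}+1}(\Omega)$ flagged in the paper is exactly what permits the differential operator $\mathcal{L}$ to be applied the requisite number of times while keeping all amplitude functions bounded; each application costs one derivative of $S$. I would make sure the boundary terms generated at each stage are themselves handled correctly—this is where the ``well-behaved on the boundary'' hypothesis and the treatment of stationary points of the second and third kinds (deferred to the appendix) enter, guaranteeing these lower-dimensional contributions also vanish at least as fast as $\sqrt{\tau}$.

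\noindent The main obstacle I anticipate is the careful bookkeeping of the boundary terms rather than the interior estimate, which is routine once the uniform lower bound $|\nabla S - \bu_0| \geq c$ is in hand. Each integration by parts spawns a surface integral over $\partial\Omega$, and these surface integrals are themselves oscillatory integrals in one lower dimension; controlling their $\tau$-decay rigorously requires either a recursive application of the same non-stationary phase principle on $\partial\Omega$ (using that $\bu_0 \notin \nabla S(\partial\Omega)$ so the tangential gradient of the restricted phase is also nonvanishing) or an appeal to the boundary hypotheses in Appendix~\ref{sec:secondkindpoints}. Keeping the dimensional counting consistent across the interior and boundary contributions so that \emph{both} achieve the $O(\sqrt{\tau})$ rate is the delicate part of the argument.
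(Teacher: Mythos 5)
Your interior estimate is essentially identical to the paper's own argument: the operator $\mathcal{L}$ you introduce is exactly the paper's recursive construction of the fields $\bv_j = \frac{\nabla \Psi}{\|\nabla\Psi\|^2}g_j$ with the divergence theorem applied $m > \frac{d}{2}$ times, and your counting $O(\tau^{m-d/2})$ for the volume term is correct. The genuine gap is in your treatment of the boundary terms. You assert that since $\bu_0 \notin \nabla S(\partial\Omega)$, ``the tangential gradient of the restricted phase is also nonvanishing,'' and you propose to control the surface integrals by recursively applying non-stationary phase on $\partial\Omega$. That inference is false: $\bu_0\notin\mathcal{C}$ only guarantees that the \emph{full} gradient $\nabla\Psi = \nabla S - \bu_0$ is nonzero on $\partial\Omega$; the \emph{tangential} projection of $\nabla\Psi$ can still vanish wherever $\nabla\Psi$ happens to be normal to the boundary. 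These are precisely the stationary points of the second kind, and they occur generically. Concretely, take $\Omega$ the closed unit disk, $S(\bx) = \frac{1}{2}\|\bx\|^2$, and $\bu_0 = (2,0)$: then $\mathcal{B} = \emptyset$, $\mathcal{C}$ is the unit circle, $\bu_0\notin\mathcal{C}$, $\mathcal{A}_{\bu_0}=\emptyset$, yet the restricted phase $\Psi(\theta) = \frac{1}{2} - 2\cos\theta$ on the boundary has stationary points at $\theta = 0$ and $\theta=\pi$. So a non-stationary-phase recursion on the boundary cannot get off the ground, and this is not a bookkeeping issue but a wrong step.

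The paper's proof handles exactly this situation: second-kind stationary points of $\Psi$ on $\Omega$ are first-kind stationary points of $\Psi(\bx(\by))$ on the $(d-1)$-dimensional manifold $\partial\Omega$. Under the well-behaved assumption of Appendix~\ref{sec:secondkindpoints} (finitely many such points), the $(d-1)$-dimensional stationary phase expansion --- the machinery invoked from case (ii) --- bounds each bare boundary integral by $O\left(\tau^{\frac{d-1}{2}}\right)$, so the $j^{\mathrm{th}}$ term in the boundary sum is of order $\tau^{j}\cdot\tau^{\frac{d-1}{2}}/\tau^{\frac{d}{2}} = O\left(\tau^{j-\frac{1}{2}}\right)$, and the total is $O(\sqrt{\tau})$, dominated by $j=1$. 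Note that this is where the $\sqrt{\tau}$ rate in the lemma actually comes from: it is the irreducible contribution of the second-kind points, not an artifact of how many times one integrates by parts in the interior. Your fallback clause (appealing to the appendix hypotheses) points in the right direction, but those hypotheses only give finiteness of the second-kind points; one must then apply \emph{stationary} (not non-stationary) phase on the boundary to finish. As written, your primary mechanism for the boundary contribution fails, and with it the claimed rate.
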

\begin{proof}
To improve readability, we prove Lemma~\ref{lemma:emptylemma} first in the one
dimensional setting and separately offer the proof for multiple dimensions.

\subsubsection{Proof of Lemma~\ref{lemma:emptylemma} in one Dimension}
Let $s$ denote the derivative (1D gradient) of $S$, i.e, $s(x) := S^{\prime}(x)$.
The bounded closed interval $\Omega$ is represented by $\Omega=[b_{1},b_{2}]$,
with the length $L=\mu(\Omega)=b_{2}-b_{1}$. As $u_0 \notin \mathcal{C}$,
there is no $x\in\Omega$ for which $s(x)=u_0$. Recalling the definition of
$\Psi$, namely $\Psi(x;u) \equiv S(x) - u x$, we see that $\Psi^{\prime}(x)
\not=0$ and is of constant sign in $[b_{1},b_{2}]$. It follows that $\Psi(x)$
is strictly monotonic. Defining $v=\Psi(x)$, we have from (\ref{eq:Ftau})
\begin{equation}
F_{\tau}(u_{0})=\frac{1}{\sqrt{2\pi\tau L}}
\int_{\Psi(b_{1})}^{\Psi(b_{2})}\exp\left(\frac{iv}{\tau}\right)t(v)dv.
\end{equation}
 Here $t(v)=\frac{1}{\Psi^{\prime}\left( \Psi^{-1}(v) \right)}$. The inverse
 function is guaranteed to exist due to the monotonicity of
 $\Psi$. Integrating by parts we get
\begin{eqnarray}
F_{\tau}(u_{0})\sqrt{2\pi\tau L} & = &
\frac{\tau}{i}\left[\exp\left(\frac{i\Psi(b_{2})}{\tau}\right)
  t\left(\Psi(b_{2})\right)-
  \exp\left(\frac{i\Psi(b_{1})}{\tau}\right)t\left(\Psi(b_{1})\right)\right]\nonumber
\\ & &
-\frac{\tau}{i}\int_{\Psi(b_{1})}^{\Psi(b_{2})}\exp\left(\frac{iv}{\tau}\right)t^{\prime}(v)dv.\label{eq:ibp}
 \end{eqnarray}
It follows that
 \begin{equation}
\left|F_{\tau}(u_{0})\right| \leq \frac{\sqrt{\tau}} {\sqrt{2\pi L}}
\left(\frac{1}{|s(b_{2})-u_{0}|}+\frac{1}{|s(b_{1})-u_{0}|}
+\int_{\Psi(b_{1})}^{\Psi(b_{2})}\left|t^{\prime}(v)\right|dv\right).
\end{equation}
 
\subsubsection{Proof of Lemma~\ref{lemma:emptylemma} in Finite Dimensions}
\label{sec:FiniteDimensionProof} 
As $\nabla \Psi(\bx;\bu_0) \not=\mathbf{0}, \forall \bx$, the vector field 
 \begin{equation}
 \bv_1(\bx) = \frac{\nabla \Psi}{\|\nabla \Psi\|^2}
 \end{equation}
 is well-defined. Choose $m>\frac{d}{2}$ (with this choice explained below) and
 for $j \in \{1,2,\ldots,m\}$, recursively define the function $g_{j}(\bx)$ and
 the vector field $ \bv_{j+1}(\bx)$ as follows:
  \begin{align}
  g_{1}(\bx)  &= 1,\\
  g_{j+1}(\bx)  &= \nabla \cdot \bv_j (\bx) \mbox{   and}\\
 \bv_{j+1}(\bx)  &= \frac{\nabla \Psi}{\|\nabla \Psi\|^2}g_{j+1}(\bx).
 \end{align}
 Using the equality
 \begin{align}
\label{eq:exppsirewrite}
\exp\left(\frac{i}{\tau}\Psi(\bx;\bu_0)\right) g_{j}(\bx) &= i\tau
\left[\nabla \cdot \bv_j(\bx) \right] \exp\left(\frac{i}{\tau}\Psi(\bx;\bu_0)
\right) \nonumber \\ &- i\tau\nabla \cdot \left[\bv_j(\bx)
  \exp\left(\frac{i}{\tau}\Psi(\bx;\bu_0) \right)\right]
\end{align}
where $\nabla \cdot$ is the divergence operator, and applying the divergence
theorem $m$ times, the Fourier transform in (\ref{eq:FTransform}) can be
rewritten as
 \begin{align}
F_{\tau}(\bu_0) &= \frac{1}{ (2 \pi \tau )^{\frac{d}{2}} \mu(\Omega)^{\frac{1}{2}}}
\left(i \tau\right)^m \int\limits_{\Omega} g_{m+1}(\bx)
\exp\left(\frac{i}{\tau } \Psi(\bx)\right) d\bx \nonumber \\
\label{eq:FTrewritten}
 &- \frac{1}{ (2 \pi \tau )^{\frac{d}{2}} \mu(\Omega)^{\frac{1}{2}}} \sum\limits_{j
  =1}^{m} (i \tau)^{j} \int\limits_{\partial \Omega} \left[\bv_j(\bx(\by))
  \cdot \mathbf{n}\right] \exp\left(\frac{i}{\tau } \Psi(\bx(\by))\right)d\by
 \end{align}
 which is similar to \eqref{eq:ibp}.

 We would like to add a note on the differentiability of $S$ which we briefly
 mentioned after Lemma~\ref{lemma:densityLemma}. The divergence theorem is
 applied $m>\frac{d}{2}$ times to obtain sufficiently higher order powers of
 $\tau$ in the numerator so as to exceed the $\tau^{\frac{d}{2}}$ term in the
 denominator of the first line of (\ref{eq:FTrewritten}). This necessitates
 that $S$ is at least $\frac{d}{2}+1$ times differentiable. The smoothness
 constraint on $S$ can thus be relaxed and replaced by $S \in
 C^{\frac{d}{2}+1} (\Omega)$.
 
The additional complication of the $d$-dimensional proof lies in resolving the
geometry of the terms in the second line of (\ref{eq:FTrewritten}). Here
$\mathbf{n}$ is the unit outward normal to the positively oriented boundary
$\partial \Omega$ parameterized by $\by$. As $m>\frac{d}{2}$, the term on the right
side of the first line in (\ref{eq:FTrewritten}) is $o(\sqrt{\tau})$ and hence
can be overlooked. To evaluate the remaining integrals within the summation in
(\ref{eq:FTrewritten}), we should take note that the stationary points of the
second kind for $\Psi(\bx)$ on $\Omega$ correspond to the first kind of
stationary points for $\Psi(\bx(\by))$ on the boundary $\partial \Omega$. We
show in case (ii) that the contribution of a stationary point of the first
kind in a $d-1$ dimensional space is $O(\tau^{\frac{d-1}{2}})$. As the
dimension of $\partial \Omega$ is $d-1$, we can conclude that the total
contribution from the stationary points of the second kind is
$O(\tau^{\frac{d-1}{2}})$. After multiplying and dividing this contribution by
the corresponding $\tau^{j}$ and $\tau^{\frac{d}{2}}$ factors respectively, it is easy
to see that the contribution of the $j^{\mathrm{th}}$ integral (out of the $n$
integrals in the summation) in (\ref{eq:FTrewritten}) is
$O\left(\tau^{j-\frac{1}{2}}\right)$ and hence the total contribution of the $m$
integrals is of $O(\sqrt{\tau})$. Here we have safely ignored the
stationary points of the third kind as their contributions are minuscule
compared to the other two kinds as shown in \cite{Wong89}. Combining all the
terms in (\ref{eq:FTrewritten}) we get the desired result, namely
$F_{\tau}(u_0) = O(\sqrt{\tau})$. For a detailed exposition of the proof, we
encourage the reader to refer to Chapter 9 in \cite{Wong89}.\\
\end{proof}

We then get $P_{\tau}(\bu_0) = O(\tau)$. Since $ \nabla S(\Omega)$ is a
compact set in $\mathbb{R}^d$ and $\bu_0 \notin \nabla S(\Omega)$, we can
choose a neighborhood $\Neta(\bu_0)$ around $\bu_0$ such that for $\bu \in
\Neta(\bu_0)$, no stationary points exist and hence
\begin{align}
\lim\limits_{\tau \rightarrow 0} \int\limits_{\Neta(\bu_0)} P_{\tau}(\bu) d\bu =0.
\end{align}
Since the cardinality $N(\bu)$ of the set $\Au$ defined in (\ref{eq:setA}) is
zero for $\bu \in \Neta(\bu_0)$, the true density $P(\bu)$ of the random
variable transformation $Y=\nabla S(X)$ given in (\ref{eq:graddensity}) also
vanishes for $\bu \in \Neta(\bu_0)$.\\\\

\noindent \textbf{case (ii)}: For $\bu_0 \notin \mathcal{C}$, let $N(\bu_0) >
0$. In this case, the number of stationary points in the \emph{interior} of
$\Omega$ is non-zero and \emph{finite} as a consequence of
Lemma~\ref{lemma:finitenessLemma}. We can then rewrite
\begin{align}
\label{eq:Ftau2}
F_{\tau}(\bu_0)= G + \frac{1}{(2 \pi \tau )^{\frac{d}{2}} \mu(\Omega)^{\frac{1}{2}}}
\sum\limits_{k=1}^{N(\bu_0)} \int\limits_{\overline{\Neta(\bx_k)}}
\exp\left(\frac{i}{\tau} \Psi(\bx;\bu_0)\right) d\bx
\end{align}
where
\begin{align}
\label{eq:Gempty}
G \equiv \int\limits_{K} \exp\left(\frac{i}{\tau} \Psi(\bx;\bu_0) \right) d\bx.
\end{align}
and the domain $K \equiv \Omega \setminus \bigcup\limits_{k=1}^{N(\bu_0)}
\overline{\Neta(\bx_k)}$. The closed regions
$\left\{\overline{\Neta(\bx_k)}\right\}_{k=1}^{N(\bu_0)}$ are obtained from
Lemma~\ref{lemma:neighborhoodLemma}.

Firstly, note that the the set $K$ contains no stationary points by
construction. Secondly, the boundaries of $K$ can be classified into two
categories: those that overlap with the sets $\overline{\Neta(\bx_k)}$ and
those that coincide with $\Gamma = \partial \Omega$. Propitiously, the
orientation of the overlapping boundaries between the sets $K$ and each
$\overline{\Neta(\bx_k)}$ are in opposite directions as these sets are located
at different sides when viewed from the boundary. Hence, we can \emph{exclude}
the contributions from the overlapping boundaries between $K$ and
$\overline{\Neta(\bx_k)}$ while evaluating $F_{\tau}(\bu_0)$ in
(\ref{eq:Ftau2}) as they \emph{cancel} each other out.

To compute $G$ we leverage case (i) which also includes the contribution from the boundary $\Gamma$ and get 
\begin{equation}
\label{eq:epsilon1}
G = \epsilon_1(\bu_0,\tau) = O(\sqrt{\tau}).
\end{equation}

To evaluate the remaining integrals over $\overline{\Neta(\bx_k)}$, we take
into account the contribution from the stationary point at $\bx_k$ and obtain
\begin{align}
\label{eq:statpha}
\int\limits_{\overline{\Neta(\bx_k)}} \exp\left(\frac{i}{\tau}
\Psi(\bx;\bu_0)\right) d\bx = &\frac{a(2\pi\tau)^{\frac{d}{2}}}{ \sqrt{|\det
    (\mathcal{H}_{\bx_k})|}} \exp\left(\frac{i}{\tau} \Psi(\bx_k;\bu_0) + i
\sigma_{\bx_k} \frac{\pi}{4} \right) \nonumber \\ &+ \epsilon_2(\bu_0,\tau)
\end{align}
where 
\begin{equation}
\label{eq:epsilon2}
\epsilon_2(\bu_0,\tau)=O\left(\tau^{\frac{d+1}{2}}\right) \leq \tau^{\frac{d+1}{2}} \gamma_{1}(\bu_{0})
\end{equation}
for a continuous bounded function $\gamma_{1}(\bu)$ \cite{Wong89}. The
variable $a$ in (\ref{eq:statpha}) takes the value $1$ if $\bx_k$ lies in the
interior of $\Omega$, otherwise equals $\frac{1}{2}$ if $\bx_k \in \partial
\Omega$. Since $\bu \notin \mathcal{C}$, stationary points do not occur on
the boundary and hence $a=1$ for our case. Recall that $\sigma_{\bx_k}$ is the
signature of the Hessian at $\bx_k$. Note that the main term has the factor
$\tau^{\frac{d}{2}}$ in the numerator when we perform stationary phase in $d$ dimensions as
we mentioned under the finite dimensional proof of
Lemma~\ref{lemma:emptylemma}.

Coupling (\ref{eq:Ftau2}), (\ref{eq:Gempty}), and (\ref{eq:statpha}) yields
\begin{align}
\label{eq:Ftau_approx}
F_{\tau}(\bu_0) = \frac{1}{\mu(\Omega)^{\frac{1}{2}}}
\sum\limits_{k=1}^{N(\bu_0)}\exp\left(\frac{i}{\tau} \Psi(\bx_k;\bu_0) + i
\sigma_{\bx_k} \frac{\pi}{4} \right) \frac{1}{\sqrt{|\det
    (\mathcal{H}_{\bx_k})|}} +\epsilon_3(\bu_0,\tau)
\end{align}
where
\begin{equation}
\epsilon_3(\bu_0,\tau) = \epsilon_1(\bu_0,\tau) + \frac{\epsilon_2(\bu_0,\tau)}{(2 \pi \tau )^{\frac{d}{2}}\mu(\Omega)^{\frac{1}{2}}}.
\end{equation}
As $\epsilon_1(\bu_0,\tau) = O(\sqrt{\tau})$ and $\epsilon_2(\bu_0,\tau) =
O\left(\tau^{\frac{d+1}{2}}\right)$ from (\ref{eq:epsilon1}) and
(\ref{eq:epsilon2}) respectively, we have $\epsilon_3(\bu_0,\tau) =
O(\sqrt{\tau})$.  Based on the definition of the power spectrum $P_{\tau}$ in
(\ref{eq:powerspectrum}), we get
\begin{align}
P_\tau (\bu_0) &= \frac{1}{\mu(\Omega)} \sum\limits_{k=1}^{N(\bu_0)}
\frac{1}{|\det (\mathcal{H}_{\bx_k})|} \nonumber \\ &+\frac{1}{\mu(\Omega)}
\sum\limits_{k=1}^{N(\bu_0)} \sum\limits_{\substack{l=1\\l\neq k}}^{N(\bu_0)}
\frac{\exp\left\{\frac{i}{\tau}
  \left[\Psi(\bx_k;\bu_0)-\Psi(\bx_l;\bu_0)\right]+ i (\sigma_{\bx_k} -
  \sigma_{\bx_l})\frac{\pi }{ 4}\right\}}{\sqrt{|\det(\mathcal{H}_{\bx_k})|}
  \sqrt{|\det (\mathcal{H}_{\bx_l})|}}\nonumber \\ &+
\epsilon_{4}(\bu_{0},\tau)
\label{PRIMARY}
\end{align}
 where $\epsilon_{4}(\bu_{0})$ includes both the squared magnitude of
 $\epsilon_{3}(\bu_{0},\tau)$ and the cross terms involving the first term in
 (\ref{eq:Ftau_approx}) and $\epsilon_{3}(\bu_{0},\tau)$. Notice that the main
 term in (\ref{eq:Ftau_approx}) can be bounded \emph{independently} of $\tau$
 as
\begin{equation}
\left|\exp\left(\frac{i}{\tau}\Psi(\bx_k;\bu_0) + i \sigma_{\bx_k} \frac{\pi}{4} \right)\right|=1,\forall\tau \neq 0
\end{equation}
and $\det(\mathcal{H}_{\bx_k})\not=0,\forall k$. Since
$\epsilon_{3}(\bu_{0},\tau)=O(\sqrt{\tau})$, we get
$\epsilon_{4}(\bu_{0},\tau)=O(\sqrt{\tau})$. Furthermore, as
$\epsilon_{4}(\bu_{0},\tau)$ can be also be uniformly bounded by a function of
$\bu$ for small values of $\tau$, we have
\begin{equation}
\label{eq:errortermequalzero}
\lim\limits_{\tau \rightarrow 0} \int\limits_{\overline{\Neta(\bu_0)}}\epsilon_{4}(\bu_{0},\tau) d\bu=0.
\end{equation}
Observe that the term on the right side of the first line in (\ref{PRIMARY})
matches the anticipated expression for the density function $P(u_{0})$ given
in (\ref{eq:graddensity}). The cross phase factors in the second line of
(\ref{PRIMARY}) arise due to multiple remote locations $\bx_k$ and $\bx_l$
indexing into $\bu$. The cross phase factors when evaluated can be shown to be
proportional to $\cos\left(\frac{1}{\tau}\right)$. Since
$\lim_{\tau\rightarrow0}\cos\left(\frac{1}{\tau}\right)$ is not
defined, $\lim_{\tau\rightarrow0}P_{\tau}(\bu_{0})$ does not exist. We briefly
alluded to this problem immediately following the statement of
Theorem~\ref{thm:GradDensity} in Section~\ref{sec:equivalence}. However, the
following lemma which invokes the inverse function $\nabla
S_{\bx}^{(-1)}(\bu):\overline{\Neta(\bu_0)} \rightarrow
\overline{\Neta(\bx)}$---defined in Lemma~\ref{lemma:neighborhoodLemma} where
$\bx$ is written as a function of $\bu$---provides a simple way to nullify the
cross phase factors. Note that since each $\nabla S_{\bx}^{(-1)}$ is a
bijection, $N(\bu)$ doesn't vary over $\overline{\Neta(\bu_0)}$. Pursuant to
Lemma~\ref{lemma:neighborhoodLemma}, the Hessian signatures
$\sigma_{\bx_k(\bu)}$ and $\sigma_{\bx_l(\bu)}$ also remain constant over
$\overline{\Neta(\bu_0)}$.  \\

\begin{lemma}{[}Cross Factor Nullifier Lemma{]}
\label{lemma:cross}
The integral of the cross term in the second line of (\ref{PRIMARY}) over the
closed region $\overline{\Neta(\bu_0)}$ approaches zero as $\tau \rightarrow
0$, i.e, $\forall k \not=l$
\begin{equation}
\label{eq:crosstermequalzero}
\lim\limits_{\tau \rightarrow 0} \int\limits_{\overline{\Neta(\bu_0)}}
\frac{\exp\left\{\frac{i}{\tau}
  \left[\Psi(\bx_k(\bu);\bu)-\Psi(\bx_l(\bu);\bu)\right]
  \right\}}{|\det(\mathcal{H}_{\bx_k(\bu)})|^{\frac{1}{2}}|\det
  (\mathcal{H}_{\bx_l(\bu)})|^{\frac{1}{2}}} d\bu = 0.
\end{equation}
\end{lemma}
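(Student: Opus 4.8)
The plan is to read (\ref{eq:crosstermequalzero}) as a single $d$-dimensional oscillatory integral in the \emph{frequency} variable $\bu$,
\[
I_\tau \;=\; \int\limits_{\overline{\Neta(\bu_0)}} A(\bu)\,\exp\!\left(\frac{i}{\tau}\,\theta(\bu)\right) d\bu,
\qquad
\theta(\bu) := \Psi(\bx_k(\bu);\bu) - \Psi(\bx_l(\bu);\bu),
\]
with amplitude $A(\bu)=|\det(\mathcal{H}_{\bx_k(\bu)})|^{-\frac12}|\det(\mathcal{H}_{\bx_l(\bu)})|^{-\frac12}$, and to show that it is a \emph{non-stationary} phase integral. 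Once $\nabla_\bu\theta$ is shown to be nowhere zero on the compact set $\overline{\Neta(\bu_0)}$, the vanishing of the limit follows from exactly the integration-by-parts / divergence-theorem device used for Lemma~\ref{lemma:emptylemma}, now applied in $\bu$ rather than in $\bx$.

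The crux is the evaluation of $\nabla_\bu\theta$. Writing $\varphi_k(\bu):=\Psi(\bx_k(\bu);\bu)$ and differentiating by the chain rule produces two pieces: a term that contracts $\nabla_\bx\Psi(\bx_k(\bu);\bu)$ against the Jacobian $\nabla_\bu\bx_k(\bu)$, and a term from the explicit $\bu$-dependence. The first piece \emph{vanishes} because $\bx_k(\bu)$ is by construction a stationary point of $\Psi(\cdot\,;\bu)$, i.e.\ $\nabla_\bx\Psi(\bx_k(\bu);\bu)=\nabla S(\bx_k(\bu))-\bu=\mathbf{0}$; the explicit dependence of $\Psi$ on $\bu$, namely the term $-\bu\cdot\bx$, contributes $-\bx_k(\bu)$. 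Hence $\nabla_\bu\varphi_k(\bu)=-\bx_k(\bu)$ (an envelope/Legendre identity), and therefore
\[
\nabla_\bu\theta(\bu) = \bx_l(\bu)-\bx_k(\bu).
\]
The differentiability of $\bu\mapsto\bx_k(\bu)$ used here is supplied by the inverse function theorem, whose hypothesis $\det\mathcal{H}\neq0$ on $\overline{\Neta(\bx_k)}$ is condition~2 of Lemma~\ref{lemma:neighborhoodLemma}; the same condition, together with the constancy of the Hessian signature (condition~4), keeps $\det\mathcal{H}_{\bx_k(\bu)}$ nonzero and of fixed sign over $\overline{\Neta(\bu_0)}$, so the absolute value introduces no kink and $A(\bu)$ is continuously differentiable and bounded away from $0$ and $\infty$.

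Because $k\neq l$ labels \emph{distinct} stationary points sitting in the disjoint neighborhoods $\overline{\Neta(\bx_k)}$ and $\overline{\Neta(\bx_l)}$ of Lemma~\ref{lemma:neighborhoodLemma}, the difference $\bx_l(\bu)-\bx_k(\bu)$ never vanishes; by continuity on the compact $\overline{\Neta(\bu_0)}$ its norm is bounded below by a positive constant, so $\nabla_\bu\theta$ is uniformly bounded away from $\mathbf{0}$. I would then set $\mathbf{w}(\bu)=\nabla_\bu\theta/\|\nabla_\bu\theta\|^2$ and use, exactly as in (\ref{eq:exppsirewrite}), the identity
\[
\exp\!\left(\frac{i}{\tau}\theta\right)
= -\,i\tau\,\nabla_\bu\!\cdot\!\left[\mathbf{w}\,\exp\!\left(\frac{i}{\tau}\theta\right)\right]
+ i\tau\,(\nabla_\bu\!\cdot\!\mathbf{w})\,\exp\!\left(\frac{i}{\tau}\theta\right).
\]
Inserting this into $I_\tau$ and applying the divergence theorem once extracts a factor of $\tau$, leaving a boundary term over $\partial\,\overline{\Neta(\bu_0)}$ and two interior integrals, each with integrand bounded uniformly in $\tau$ (since $A$, $\mathbf{w}$, $\nabla_\bu A$ and $\nabla_\bu\!\cdot\!\mathbf{w}$ are bounded on the compact domain and $|\exp(\tfrac{i}{\tau}\theta)|=1$). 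Thus $|I_\tau|\le C\tau\to0$, which is the claim. The main obstacle is not this non-stationary phase estimate---routine once set up---but correctly establishing the envelope identity $\nabla_\bu\varphi_k=-\bx_k$ and confirming enough regularity of $A$ and $\theta$ for the single integration by parts; the uniform lower bound on $\|\nabla_\bu\theta\|$ is then an easy compactness consequence of the separation of the neighborhoods.
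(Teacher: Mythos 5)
Your proposal is correct and follows essentially the same route as the paper's proof: you establish the envelope identity $\nabla_{\bu}\left[\Psi(\bx_k(\bu);\bu)-\Psi(\bx_l(\bu);\bu)\right]=\bx_l(\bu)-\bx_k(\bu)\neq\mathbf{0}$ (the paper's computation of $\nabla p_{k,l}$) and then kill the non-stationary oscillatory integral by a single divergence-theorem integration by parts, extracting the factor of $\tau$ exactly as in the paper's use of the vector field $\bf_{k,l}$. The only cosmetic difference is that the paper folds the amplitude $q_{k,l}$ into the vector field before integrating by parts, whereas you keep it separate and differentiate it alongside, which changes nothing of substance.
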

The proof is given in Appendix~\ref{sec:Proof-of-Lemmas}. Combining
(\ref{eq:errortermequalzero}) and (\ref{eq:crosstermequalzero}) yields
\begin{equation}
\label{eq:cumulativeequivalence}
\lim\limits_{\tau \rightarrow 0} \int\limits_{\overline{\Neta(\bu_0)}} P_\tau
(\bu) d\bu = \frac{1}{\mu(\Omega)} \int\limits_{\overline{\Neta(\bu_0)}}
\sum\limits_{k=1}^{N(\bu)} \frac{1}{|\det (\mathcal{H}_{\bx_k(\bu)})|} d\bu =
\int\limits_{\overline{\Neta(\bu_0)}} P(\bu) d\bu.
\end{equation}
Equation~\ref{eq:cumulativeequivalence} demonstrates the equivalence of the
cumulative distributions corresponding to the densities $P_\tau (\bu)$ and
$P(\bu)$ when integrated over \emph{any} sufficiently small neighborhood
$\overline{\Neta(\bu_0)}$ of radius $\eta$. To recover the density $P(\bu)$,
we let $\alpha < \eta$ and take the limit with respect to $\alpha$ to complete
the proof.

Taking a mild digression from the main theme of this paper, in the next
section (Section~\ref{sec:charfuncformulation}), we build an informal bridge between the
commonly used characteristic function formulation for computing densities and
our wave function method. The motivation behind this section is merely to
provide an intuitive reason behind our main theorem (Theorem~\ref{thm:GradDensity})
where we directly manipulate the power spectrum of
$\phi(\bx)=\exp\left(\frac{iS(\bx)}{\tau}\right)$ into the characteristic
function formulation stated in (\ref{eq:iftcharfunc}), circumventing the need
for the closed-form expression of the density function $P(\bu)$ given in
(\ref{eq:graddensity}). We request the reader to bear in mind the following
cautionary note. What we show below \emph{cannot} be treated as a formal
proof of Theorem~\ref{thm:GradDensity}. Our attempt here merely provides a
mathematically intuitive justification for establishing the equivalence between the
power spectrum and the characteristic function formulations and thereby to the
density function $P(\bu)$. On the basis of the reasons described therein, we
strongly believe that the mechanism of stationary phase is essential to
formally prove our main theorem (Theorem~\ref{thm:GradDensity}). It is best to treat the
wave function and the characteristic function methods as two \emph{different}
approaches for estimating the probability density functions and \emph{not}
reformulations of each other. To press this point home, we also comment on the
computational complexity of the wave function and the characteristic
function methods at the end of the next section.

\section{Relation between the characteristic function and the power spectrum
  formulations of the gradient density}
\label{sec:charfuncformulation}
The characteristic function $\psi_Y(\bomega)$ for the random variable
$Y=\nabla S(X)$ is defined as the expected value of $ \exp\left(i \bomega
\cdot Y\right)$, namely
\begin{equation}
\psi_Y(\bomega) \equiv E\left[ \exp\left(i \bomega \cdot Y \right) \right] =
\frac{1}{\mu(\Omega)} \int_{\Omega} \exp\left(i \bomega \cdot \nabla
S(\bx)\right) d \bx.
\label{eq:characfunc}
\end{equation} 
Here $ \frac{1}{\mu(\Omega)}$ denotes the density of the uniformly distributed random variable $X$ on $\Omega$.

The inverse Fourier transform of a characteristic function also serves as the
density function of the random variable under consideration \cite{Billingsley95}. In other words,
the density function $P(\bu)$ of the random variable $Y$ can be obtained via
\begin{align}
\label{eq:iftcharfunc}
P(\bu) &= \frac{1}{(2\pi)^d} \int \psi_Y(\omega) \exp\left(-i \bomega \cdot \bu\right) d\bomega \nonumber \\
&= \frac{1}{(2\pi)^d \mu(\Omega)}\int \int_{\Omega} \exp\left(i \bomega \cdot \left[\nabla S(\bx)-\bu\right]\right) d\bx d\bomega.
\end{align}

Having set the stage, we can now proceed to highlight the close relationship
between the characteristic function formulation of the density and our
formulation arising from the power spectrum. For simplicity, we choose to
consider a region $\Omega$ that is the product of closed intervals, $\Omega =
\prod\limits_{i=1}^d [a_i,b_i ]$. Based on the expression for the scaled
Fourier transform $F_{\tau}(\bu)$ in (\ref{eq:Ftau}), the power spectrum
$P_{\tau}(\bu)$ is given by
\begin{equation}
P_\tau (\bu)=\frac{1}{(2\pi \tau )^d \mu(\Omega)} \int_{\Omega} \int_{\Omega}
\exp\left\{\frac{i }{\tau} \left[S(\bx) - S(\by)-(\bu \cdot
  (\bx-\by))\right]\right\} d\bx d\by.
\end{equation}
Define the following change of variables, 
\begin{equation}
\bomega = \frac{\bx - \by}{\tau},  \bnu = \frac{\bx + \by}{2}.
\end{equation}
Then,
\begin{equation}
 \bx = \bnu + \frac{\bomega \tau}{2}, \by = \bnu - \frac{\bomega \tau}{2}
\end{equation}
and the integral limits for $\bomega$ and $\bnu$ are given by
\begin{align}
W &= \prod\limits_{i=1}^d \left[\frac{a_i-b_i}{\tau},
  \frac{b_i-a_i}{\tau}\right] \\ V_{\bomega} &= \prod\limits_{i=1}^d \left[a_i
  + \frac{|\-\omega_i| \tau}{2}, b_i - \frac{|\-\omega_i|\tau}{2}\right]
\end{align}
where $\omega_i$ is the $i^{\mathrm{th}}$ component of $\bomega$. Note that the
Jacobian of this transformation is $\tau^d$.  Now we may write the integral
$P_\tau(\bu)$ in terms of these new variables as
\begin{equation}
\label{eq:Ptauomega}
P_\tau(\bu) =  \frac{1}{(2\pi)^d \mu(\Omega)} \int\limits_{W } \xi(\bomega, \bu) d\bomega
\end{equation}
where
\begin{equation}
\label{eq:xi}
\xi(\bomega, \bu) = \int\limits_{V_{\bomega}} \exp\left\{\frac{i
}{\tau}\left[S\left(\bnu + \frac{\tau\bomega}{2}\right) - S\left(\bnu -
  \frac{\tau \bomega}{2}\right)\right]\right\} \exp\left(-i \bu \cdot
\bomega\right)d\bnu.
\end{equation}

The mean value theorem applied to $S\left(\bnu + \frac{\tau\bomega}{2}\right)
- S\left(\bnu - \frac{\tau \bomega}{2}\right)$ yields
\begin{align}
\label{eq:meanvaluethm}
\xi(\bomega, \bu) = \int\limits_{V_{\bomega}} \exp\left(i \bomega \cdot \left[
  \nabla S(\bz(\bnu,\bomega))-\bu\right] \right) d\bnu,
\end{align}
where
\begin{equation}
\bz(\bnu,\bomega) \equiv c\left(\bnu + \frac{\tau\bomega}{2}\right) +(1-c)
\left(\bnu - \frac{\tau \bomega}{2}\right)
\end{equation}
with $c \in [0,1]$. When $\bomega$ is fixed and $\tau \rightarrow 0$,
$\bz(\bnu,\bomega) \rightarrow \bnu$ and so for small values of $\tau$ we get
\begin{align}
\label{eq:approxxi}
\xi(\bomega, \bu) \approx \int\limits_{\bnu \in V_{\bomega}} \exp\left(i
\bomega \cdot \left[ \nabla S(\bnu)- \bu \right] \right) d\bnu.
\end{align}
Again we would like to drive the following point home. We \emph{do not} claim
that we have formally proved the above approximation. On the contrary, we
believe that it might be an onerous task to do so as the mean value theorem
point $\bz$ in (\ref{eq:meanvaluethm}) is unknown and the integration limits
for $\nu$ directly depend on $\tau$. The approximation is stated with the sole
purpose of providing an intuitive reason for our theorem (Theorem~\ref{thm:GradDensity})
and to provide a clear link between the characteristic function and wave
function methods for density estimation.

Furthermore, note that the integral range for $\bomega$ depends on $\tau$ and so when
$\bomega = O\left(\frac{1}{\tau}\right)$, $\bomega \tau \not\rightarrow 0$ as
$\tau \rightarrow 0$ and hence the above approximation for $\xi(\bomega, \bu)$
in (\ref{eq:approxxi}) might seem to break down. To evade this seemingly
ominous problem, we manipulate the domain of integration for $\bomega$ as
follows. Fix an $\epsilon \in (0,1)$ and let
\begin{align}
W &= W^{\infty} \cup W^{\tau} = 
\left(W \setminus \prod\limits_{i=1}^d [-M_i, M_i] \right) \cup \prod\limits_{i=1}^d [-M_i,M_i] 
\end{align}
where
\begin{equation}
\label{eq:Mi}
M_i \equiv (b_i-a_i)\tau^{\epsilon - 1}.
\end{equation}
By defining $M_i$ as above, note that in $W^\tau$, $\bomega$ is deliberately
made to be $O(\tau^{\epsilon-1})$ and hence $\bomega \tau \rightarrow 0$ as
$\tau \rightarrow 0$. Hence the approximation for $\xi(\bomega, \bu)$ in
(\ref{eq:approxxi}) might hold for this integral range of $\bomega$. For
consideration of $\bomega \in W^{\infty}$, recall that
Theorem~\ref{thm:GradDensity} requires the power spectrum $P_{\tau}(\bu)$ to
be integrated over a small neighborhood $\mathcal{N}_{\alpha}(\bu_0)$ around
$\bu_0$. By using the true expression for $\xi(\bomega,\bu)$ from
(\ref{eq:xi}) and performing the integral for $\bu$ prior to $\bomega$ and
$\bnu$, we get
\begin{align}
& \int\limits_{\mathcal{N}_{\alpha}(\bu_0)}\int\limits_{W^{\infty}}
  \xi(\bomega, \bu) d\bomega d\bu = \nonumber \\ & \int\limits_{W^{\infty}
  }\int\limits_{V_{\bomega}} \exp\left\{i \bomega \cdot \left[ \nabla S\left(
    c\left(\bnu + \frac{\tau\bomega}{2}\right) +(1-c) \left(\bnu - \frac{\tau
      \bomega}{2}\right)\right)\right] \right\}
  \left[\int\limits_{\mathcal{N}_{\alpha}(\bu_0)} \exp\left(-i \bomega \cdot
    \bu \right)d\bu \right] d\bnu d\bomega.
\end{align}
Since both $M_i$ in (\ref{eq:Mi}) and the lower and the upper limits for
$\omega_i$, namely $\pm \frac{b_i-a_i}{\tau}$ respectively approach $\infty$
as $\tau \rightarrow0$, the Riemann-Lebesgue lemma \cite{Bracewell99}
guarantees that $\forall \bomega \in W^{\infty}$, the integral
\begin{align}
\int\limits_{\mathcal{N}_{\alpha}(\bu_0)} \exp\left(-i \bomega \cdot \bu \right)d\bu
\end{align}
 approaches zero as $\tau \rightarrow 0$. Hence for small values of $\tau$, we
 can expect the integral over $W^{\tau}$ to dominate over the other. This
 leads to the following approximation,
\begin{equation}
\label{eq:Ptauapprox}
 \int\limits_{\mathcal{N}_{\alpha}(\bu_0)} P_{\tau}(\bu) d\bu \approx
 \frac{1}{(2\pi)^d \mu(\Omega)} \int\limits_{\mathcal{N}_{\alpha}(\bu_0)}
 \int\limits_{W^{\tau}} \xi(\bomega,\bu) d\bomega d\bu,
\end{equation}
as $\tau$ approaches zero. Combining the above approximation with the
approximation for $\xi(\bomega,\bu)$ given in (\ref{eq:approxxi}) and noting
that the integral domain for $\bomega$ and $\bnu$ approaches $\mathbb{R}^d$
and $\Omega$ respectively as $\tau \rightarrow 0$, the integral of the power
spectrum $P_\tau(u)$ over the neighborhood $\mathcal{N}_{\alpha}(\bu_0)$ at
small values of $\tau$ in (\ref{eq:Ptauomega}) can be approximated by
\begin{equation}
 \int\limits_{\mathcal{N}_{\alpha}(\bu_0)} P_{\tau}(u) du \approx
 \frac{1}{(2\pi)^d \mu(\Omega)} \int\limits_{\mathcal{N}_{\alpha}(\bu_0)} \int
 \int\limits_{\Omega} \exp\left(i \bomega \cdot \left[ \nabla S(\bnu)- \bu
   \right] \right) d\bnu d\bomega d\bu.
\end{equation}
This form \emph{exactly} coincides with the expression given in
(\ref{eq:iftcharfunc}) obtained through the characteristic function
formulation.

The approximations given in (\ref{eq:approxxi}) and (\ref{eq:Ptauapprox})
cannot be proven easily as they involve limits of integration which directly
depend on $\tau$. Furthermore, the mean value theorem point
$\bz(\bnu,\bomega)$ in (\ref{eq:meanvaluethm}) is arbitrary and \emph{cannot
  be determined beforehand for a given value of $\tau$}. The difficulties
faced here emphasize the need for the method of stationary phase to formally
prove Theorem~\ref{thm:GradDensity}.

As we remarked before, the characteristic function and our wave function methods should \emph{not} be treated as mere reformulations of each other. This distinction is further emphasized when we find our method to be computational efficient  over the characteristic function approach in the finite sample-set scenario where we estimate the gradient density from a finite $N$ samples of the function $S$. Given these $N$ sample
values $\hat{S}$ and its gradient $\nabla \hat{S}$, the characteristic
function defined in (\ref{eq:characfunc}) needs to be computed for $N$
integral values of $\bomega$. Each value of $\bomega$ requires summation over
the $N$ sampled values of $\exp\left(i\bomega \nabla S(\bx)\right)$. Hence the
total time required to determine the characteristic function is $O(N^2)$.  The
joint density function of the gradient is obtained via the inverse Fourier
transform of the the characteristic function, which is an $O(N \log N)$
operation. The overall time complexity is therefore $O(N^2)$. In our wave
function method the Fourier transform of $\exp\left(\frac{i
  \hat{S}(\bx)}{\tau}\right)$ at a given value of $\tau$ can be computed in
$O(N \log N)$ and the subsequent squaring operation to obtain the power
spectrum can be performed in $O(N)$. Hence the density function can be
determined in $O(N \log N)$, which is more efficient when compared to the
$O(N^2)$ complexity of the characteristic function approach.

\section{Discussion}
Observe that the integrals 
\begin{equation}
I_{\tau}(\bu_{0})= \int_{\Neta(\bu_0)}P_\tau (\bu)
d\bu, \hspace{10pt}I(\bu_{0})=\int_{\Neta(\bu_0)} P(\bu) d\bu
\end{equation}
 give the interval measures of the density functions $P_{\tau}$ and $P$
 respectively. Theorem~\ref{thm:GradDensity} states that at small values of
 $\tau$, both the interval measures are approximately equal, with the
 difference between them being $O(\sqrt{\tau})$. Recall that by definition,
 $P_{\tau}$ is the normalized power spectrum of the wave function
 $\phi(\bx)=\exp\left(\frac{iS(\bx)}{\tau}\right)$.  Hence we conclude that
 the power spectrum of $\phi(\bx)$ can potentially serve as a \emph{joint
   density estimator} for the gradient of $S$ at small values of $\tau$.  We
 also built an informal bridge between our wave function method and the
 characteristic function approach for estimating probability densities by
 directly trying to recast the former expression into the latter. The
 difficulties faced in relating the two approaches reinforce the stationary phase
 method as a powerful tool to formally prove Theorem~\ref{thm:GradDensity}.  Our
 earlier result proved in \cite{Gurumoorthy12} where we employ the stationary
 phase method to compute the gradient density of Euclidean distance functions
 in two dimensions, is now generalized in Theorem~\ref{thm:GradDensity} which
 establishes similar gradient density estimation result for arbitrary smooth
 functions in arbitrary finite dimensions.

\appendix
\section{Proof of Lemmas}
\label{sec:Proof-of-Lemmas}

\noindent \emph{1. Proof of Finiteness Lemma}
\begin{proof}
 We prove the result by contradiction. Observe that $\mathcal{A}_{\bu}$ is a
 subset of the compact set $\Omega$. If $\mathcal{A}_{\bu}$ is not finite,
 then by Theorem~(2.37) in \cite{Rudin76}, $\mathcal{A}_{\bu}$ has a limit
 point $\mathbf{x}_0\in\Omega$. If $\mathbf{x}_0 \in \partial \Omega$, then
 $\bu \in \mathcal{C}$ giving a contradiction. Otherwise, consider a sequence
 $\{\mathbf{x}_n\}_{n=1}^{\infty}$, with each
 $\mathbf{x}_n\in\mathcal{A}_{\bu}$, converging to $\mathbf{x}_0$. Since
 $\nabla S(\mathbf{x}_n)=\bu$ for all $n$, from continuity it follows that
 $\nabla S(\mathbf{x}_0)=\bu$ and hence $\mathbf{x}_0\in\mathcal{A}_{\bu}$.
 Let $\mathbf{p}_n \equiv \mathbf{x}_n-\mathbf{x}_0$ and $\mathbf{h}_n \equiv
 \frac{\mathbf{p}_n}{\|\mathbf{p}_n\|}$.  Then
\begin{equation}
\lim_{n\rightarrow\infty}\frac{\nabla S(\mathbf{x}_n) -\nabla
  S(\mathbf{x}_0)-\mathcal{H}_{\mathbf{x}_0}\mathbf{p}_n}{\|\mathbf{p}_n\|}=0
\end{equation}
where the linear operator $\mathcal{H}_{\mathbf{x}_0}$ is the \emph{Hessian}
of $S$ at $\mathbf{x}_0$ (obtained from the set of derivatives of the vector
field $\nabla S : \mathbb{R}^d \rightarrow \mathbb{R}^d$ at the location
$\mathbf{x}_0$). As $\nabla S(\mathbf{x}_n) = \nabla S(\mathbf{x}_0) = \bu$
and $\mathcal{H}_{\mathbf{x}_0}$ is linear, we get
\begin{equation}
\lim_{n\rightarrow\infty} \mathcal{H}_{\mathbf{x}_0} \mathbf{h}_n = 0.
\end{equation}
Since $\mathbf{h}_n$ is defined above to be a unit vector, it follows that
$\mathcal{H}_{\mathbf{x}_0}$ is rank deficient and
$\det\left(\mathcal{H}_{\mathbf{x}_0}\right) = 0$. Hence
$\mathbf{x}_0\in\mathcal{B}$ and $\bu\in\mathcal{C}$ resulting in a
contradiction.\\
\end{proof}

\noindent \emph{2. Proof of Neighborhood Lemma}
\begin{proof}
 Observe that the set $\mathcal{B}$ defined in (\ref{eq:setB}) is closed
 because if $\bxo$ is a limit point of $\mathcal{B}$, from the continuity of
 the determinant function we have $\det\left( \mathcal{H}_{\bxo} \right)=0$ and
 hence $\bxo\in\mathcal{B}$.  Being a bounded subset of $\Omega$, $\mathcal{B}$ is also
 compact. As $\partial \Omega$ is also compact and $\nabla S$ is continuous,
 $\mathcal{C}$ is compact and hence $\mathbb{R}^d-\mathcal{C}$ is open. Then
 for $\bu_0\notin\mathcal{C}$, there exists an open neighborhood
 $\mathcal{N}_{r}(\bu_0)$ for some $r>0$ around $\bu_0$ such that
 $\mathcal{N}_{r}(\bu_0)\cap\mathcal{C}=\emptyset$. By letting $\eta =
 \frac{r}{2}$, we get the required closed neighborhood
 $\overline{\mathcal{N}_{\eta}(\bu_0)} \subset \mathcal{N}_{r}(\bu_0)$
 containing $\bu_0$.

Since $\det\left(\mathcal{H}_{\bx}\right)\not=0, \forall \bx \in
\mathcal{A}_{\bu_{0}}$, 
points 1, 2 and 3 of this lemma follow directly from the inverse function
theorem.  As $|\mathcal{A}_{\bu_{0}}|$ is finite by Lemma~\ref{lemma:finitenessLemma}, the
closed neighborhood $\overline{\Neta(\bu_0)}$ can be chosen independently of
$\bx \in \mathcal{A}_{\bu_{0}}$ so that points 1 and 3 are satisfied $\forall \bx \in
\mathcal{A}_{\bu_{0}}$. In order to prove point 4, note that the eigenvalues of
$\mathcal{H}_{\bx}$ are all \emph{non-zero} and vary continuously for $\bx \in
\overline{\mathcal{N}_{\eta}(\bx)}$. As the eigenvalues never cross zero,
they retain their sign and so the signature of the Hessian stays fixed.\\
\end{proof}

\noindent \emph{3. Proof of Density Lemma}
\begin{proof}
 Since the random variable $X$ is assumed to have a uniform distribution on
 $\Omega$, its density at every location $\bx\in\Omega$ equals
 $\frac{1}{\mu(\Omega)}$. Recall that the random variable $Y$ is obtained via
 a random variable transformation from $X$, using the function $\nabla S$. The
 Jacobian of $\nabla S$ at a location $\bx \in \Omega$ equals the Hessian
 $\mathcal{H}_{\bx}$ of the function $S$ at $\bx$.  Barring the set
 $\mathcal{C}$ corresponding to the union of the image (under $\nabla S$) of
 the set of points $\mathcal{B}$ (where the Hessian vanishes) and the boundary
 $\partial \Omega$, the density of $Y$ exists on $\bu \in
 \mathbb{R}^d-\mathcal{C}$ and is given by (\ref{eq:graddensity}). Please
 see well known sources such as \cite{Billingsley95} for a detailed explanation. \\
\end{proof}

For the sake of completeness we explicitly prove the well-known result stated
in Integral Lemma~\ref{lemma:integralLemma}.\\
 \noindent \emph{4. Proof of Integral Lemma}
\begin{proof}
Define a function $H(\bx)$ by 
\[
H(\bx)\equiv\left\{ \begin{array}{ll}
1: & \mbox{if }\bx\in\Omega;\\
0: & \mbox{otherwise}.
\end{array}\right.
\]
 Let $f(\bx)=H(\bx)\exp\left(\frac{iS(\bx)}{\tau}\right)$. Then,
\begin{equation}
F_{\tau}(\bu)=\frac{1}{(2\pi\tau)^{\frac{d}{2}} \mu(\Omega)^{\frac{1}{2}}}\int
f(\bx)\exp\left(\frac{-i(\bu \cdot\bx)}{\tau}\right)d\bx.
\end{equation}
 Letting $\bv = \frac{\bu}{\tau}$ and $G(\bv)=F_{\tau}(\bu)$, we get
\begin{equation}
(\tau)^{\frac{d}{2}}\mu(\Omega)^{\frac{1}{2}}G(\bv)=\frac{1}{(2\pi)^{\frac{d}{2}}}\int
  f(\bx)\exp\left(-i \bv \cdot\bx \right)d\bx.
\end{equation}
 As $f$ is $\ell^{1}$ integrable, by Parseval's Theorem (see \cite{Bracewell99}) we have
\begin{equation}
\int\left|f(\bx)\right|^{2}d\bx=\tau^d \mu(\Omega) \int\left|F_{\tau}(\bv
\tau)\right|^{2}d\bv = \mu(\Omega) \int\left|F_{\tau}(\bu)\right|^{2}d\bu.
\end{equation}
 By noting that 
\begin{equation}
\int\left|f(\bx)\right|^{2}d\bx= \int\limits
_{\Omega}\left|\exp\left(\frac{iS(\bx)}{\tau}\right)\right|^{2}d\bx=
\mu(\Omega),
\end{equation}
 we get the desired result, namely
\begin{equation}
\int\left|F_{\tau}(\bu)\right|^{2}d\bu=1.
\end{equation}
 \end{proof}
 
 \noindent \emph{5. Proof of Cross Factor Nullifier Lemma}
 \begin{proof}
Let $p_{k,l}(\bu)$ denote the phase of the exponential in the cross term
(excluding the terms with constant signatures), i.e,
\begin{align}
\label{P} 
p_{k,l}(\bu) &= \Psi(\bx_k(\bu);\bu)-\Psi(\bx_l(\bu);\bu) \nonumber \\
&= S(\bx_k(\bu)) - S(\bx_l(\bu)) - \bu \cdot (\bx_k(\bu) - \bx_l (\bu)).
\end{align}
Its gradient with respect to $\bu$ equals
\begin{equation}
 \nabla p_{k,l} (\bu) = J_{\bx_k}[\nabla S(\bx_k(\bu))-\bu] - J_{\bx_l}[\nabla
   S(\bx_l(\bu))-\bu] - \bx_k(\bu) + \bx_l(\bu)
\end{equation} 
where $J_{\bx_k}$ is the Jacobian of $\bx(\bu)$ at $\bx_k$ whose $(i,j)^{\mathrm{th}}$
term equals $\frac{\partial x_j}{\partial u_i}$ (with a similar expression for $J_{\bx_k}$).  Since
$\nabla S(\bx_k(\bu)) = \nabla S(\bx_l(\bu)) = \bu$, we get $ \nabla p_{k,l}
(\bu) = \bx_l(\bu) - \bx_k(\bu) \not= 0$.  This means that the phase function
of the exponential in the statement of the lemma is \emph{non-stationary} and
hence \emph{does not contain any stationary points of the first kind}. Let
\begin{align}\label{Q}
q_{k,l}(\bu) = \frac{1}{\sqrt{|\det(\mathcal{H}_{\bx_k(\bu)})|}\sqrt{|\det
    (\mathcal{H}_{\bx_l(\bu)})|}}.
\end{align}
Since $\nabla p_{k,l} \not=0$, consider the vector field $\bf_{k,l}(\bu) =
\frac{\nabla p_{k,l}(\bu)}{\|\nabla p_{k,l}(\bu)\|^2}q_{k,l}(\bu) $ and as
before note that
\begin{align}
\label{eq:exprewrite}
\exp\left(\frac{i}{\tau}p_{k,l}(\bu)\right)q_{k,l}(\bu) &= i\tau \left[\nabla
  \cdot \bf_{k,l}(\bu)\right] \exp\left(\frac{i}{\tau}p_{k,l}(\bu) \right)
\nonumber \\ &- i\tau\nabla \cdot \left[\bf_{k,l}(\bu)
  \exp\left(\frac{i}{\tau} p_{k,l}(\bu)\right)\right]
\end{align}
where $\nabla\cdot$ is the divergence operator.  Inserting (\ref{eq:exprewrite})
in the second line of (\ref{PRIMARY}), integrating over
$\overline{\Neta(\bu_0)}$, and applying the divergence theorem we get
\begin{align}
\label{eq:integralafterdivergence}
\int\limits_{\overline{\Neta(\bu_0)}} \exp\left(\frac{i}{\tau}p_{k,l}(\bu)
\right) q_{k,l}(\bu) d\bu &= i \tau \int\limits_{\overline{\Neta(\bu_0)}}
\left[ \nabla \cdot \bf_{k,l}(\bu) \right] \exp\left(\frac{i}{\tau}
p_{k,l}(\bu) \right)d\bu \nonumber \\ &- i \tau \int\limits_{\partial
  \overline{\Neta(\bu_0)}} \left( \bf_{k,l} \cdot \mathbf{n}\right)
\exp\left(\frac{i}{\tau } p_{k,l}(\bu(\bv))\right)d\bv.
\end{align}
Here $\mathbf{n}$ is the unit outward normal to the positively oriented
boundary $\partial \overline{\Neta(\bu_0)}$ parameterized by $\bv$.  In the
right side of (\ref{eq:integralafterdivergence}), notice that all terms inside
the integral are bounded. The factor $\tau$ outside the integral ensures that
\begin{equation}
\lim\limits_{\tau \rightarrow 0} \int\limits_{\overline{\Neta(\bu_0)}}
\exp\left(\frac{i}{\tau}p_{k,l}(\bu) \right) q_{k,l}(\bu) d\bu =0.
\end{equation}
\end{proof}

\section{Well-behaved function on the boundary}
\label{sec:secondkindpoints}
One of the foremost requirements for Theorem~\ref{thm:GradDensity} to be valid
is that the function $\Psi(\bx;\bu)=S(\bx)-\bu \cdot \bx$ have a \emph{finite}
number of stationary points of the second kind on the boundary for almost all
$\bu$. The stationary points of the second kind are the critical points on the
boundary $\Gamma = \partial \Omega$ where a level curve $\Psi(\bx;\bu) = c$
touches $\Gamma$ for some constant $c$
\cite{Wong89,Wong81}. Contributions from the second kind are
generally $O\left(\tau^{\frac{d+1}{2}}\right)$, but an infinite number of them
could produce a combined effect of $O\left(\tau^{\frac{d}{2}}\right)$,
tantamount to a stationary point of the first kind \cite{Wong89}. If so, we
need to account for the contribution from the boundary which could in effect
invalidate our theorem and therefore our entire approach. However, the
condition for the infinite occurrence of stationary points of the second kind
is so restrictive that for all practical purposes they can be ignored. If the
given function $S$ is well-behaved on the boundary in the sense explained
below, these thorny issues can be sidestepped. Furthermore, as we will be
integrating over $\bu$ to remove the cross-phase factors, it suffices that the
aforementioned finiteness condition be satisfied for \emph{almost} all $\bu$
instead of for all $\bu$.

 Let the location $\bx \in \Gamma$ be parameterized by the variable $\by$, i.e.,
 $\bx(\by)$. Let $Q(\bx)$ denote the Jacobian matrix of $\bx(\by)$ whose
 $(i,j)^{\mathrm{th}}$ entry is given by
 \begin{align} 
\label{DEFQ}
Q_{ij}(\bx) = \frac{\partial x_j}{\partial y_i}
\end{align}
Stationary points of the second kind occur at locations $\bx$ where $\nabla
\Psi(\bx(\by);\bu) = 0$ which translates to
\begin{equation}
\label{eq:secondkindcondition}
Q(\bx)(\nabla S - \bu) =0. 
\end{equation}
This leads us to define the notion of a well-behaved function on the boundary.
\begin{definition}
\label{def:wellbehaved}
A function $S$ is said to be well-behaved on the boundary provided
(\ref{eq:secondkindcondition}) is satisfied only at a \emph{finite} number of
boundary locations for almost all $\bu \in \mathcal{C}$. \\
\end{definition}
Definition~\ref{def:wellbehaved} immediately raises the following
questions: (i) \emph{Why is the assumption of a  well behaved $S$ weak}? and (ii)
 \emph{Can the well-behaved condition imposed on $S$ be easily satisfied in
  all practical scenarios}?  Recall that the finiteness of premise
(\ref{eq:secondkindcondition}) entirely depends on the behavior of the
function $S$ on the boundary $\Gamma$. Scenarios can be manually handcrafted
where the finiteness assumption is violated and (\ref{eq:secondkindcondition})
is forced to satisfy at all locations. Hence it is meaningful to ask:
\emph{What stringent conditions are required to incur an infinite number of
  stationary points on the boundary?} We would like to convince the reader
that in all practical scenarios, $S$ will sustain only a finite number of
stationary points on the boundary and hence it is befitting to assume that the
function $S$ is well-behaved on the boundary.  The reader should bear in mind
that our explanation here is \emph{not} a formal proof but an intuitive
reasoning of why the well-behaved condition imposed on $S$ is reasonable.

To streamline our discussion, we consider the special case where the boundary
$\Gamma$ is composed of a sequence of hyper-planes as any smooth boundary can
be approximated to a given degree of accuracy by a finite number of
hyper-planes. On any given hyperplane, $Q(\bx)$ remains fixed. Recall that from
the outset, we omit the set $\mathcal{C}$ (i.e., $\bu \notin \mathcal{C}$)
which includes the image under $\nabla S$ of the boundary $\Gamma = \partial
\Omega$. Hence $\nabla S \not= \bu$ for any point $\bx \in \Gamma$ for $\bu
\notin \mathcal{C}$. Since the rank of $Q$ is $d-1$ and $\nabla S-\bu$ is
required to be orthogonal to all the $d-1$ rows of $Q$ for
condition~\ref{eq:secondkindcondition} to hold, $\nabla S-\bu$ is confined to
a 1-D subspace.  So if we enforce $\nabla S$ to vary smoothly on the
hyperplane and not be constant, we can circumvent the occurrence of an
infinite number of stationary points of the second kind for all
$\bu$. Additionally, we can safely disregard the characteristics of the
function $S$ at the intersection of these hyperplanes as they form a set of
measure zero.  To press this point home, we now formulate the worst possible
scenario where $\nabla S$ is a constant vector $\bt$. Let $\mathcal{D}$ denote
a portion of $\Gamma$ where $\nabla S = \bt$. Let $\bu = \bu_0$ and $\bu =
\bu_1$ result in infinite number of stationary points of the second kind on
$\mathcal{D}$. As $\nabla S-\bu$ is limited to a 1-D subspace, we must have
$\bt-\bu_1 = \lambda (\bt-\bu_0)$ for some $\lambda \not=0$, i.e, $\bu_1 =
(1-\lambda)\bt + \lambda \bu_0$. So in any given region of $\Gamma$, there is
at most a 1-D subspace (measure zero) of $\bu$ which results in an infinite
number of stationary points of the second kind in that region. Our
well-behaved condition is then equivalent to assuming that \emph{the number
  of planar regions on the boundary where $\nabla S$ is constant is finite}.

The boundary condition is best exemplified with a 2D example. Consider a line
segment on the boundary $x_2 = m x_1 + b$. Without loss of generality, assume
the parameterization $y = x_1$. Then $Q = \left[\begin{array}{c}1
    \\ m \end{array} \right]$. Equation~\ref{eq:secondkindcondition} can be
interpreted as $S_1 + m S_2 = u_1+mu_2$ where $S_i = \frac{\partial
  S}{\partial x_i}$. So if we plot the sum $S_1 + mS_2$ for points along the
line, the requirement reduces to \emph{the function $S_1 + mS_2$ not
  oscillating an infinite number of times around an infinite number of ordinate
  locations $u_1+mu_2$}. It is easy to see that the imposed condition is
indeed weak and is satisfied by almost all smooth functions allowing us to affirmatively conclude that
the enforced well-behaved constraint (\ref{def:wellbehaved}) does not impede the usefulness and application of our wave function method for estimating joint density of the gradient function.

\bibliographystyle{siam}
\bibliography{HigherDimensionGradDensityEstimation} 

\end{document}